\newcommand{\btheta}{\bm{\theta}}
\newcommand{\bz}{\bm{z}}
\newcommand{\bomega}{\bm{\omega}}
\newcommand{\grad}[2]{\nabla_{#1}#2}
\newcommand{\marginal}{\ensuremath{\bm{m}}\xspace}
\definecolor{ourspecialtextcolor}{rgb}{0.528, 0.471, 0.701}
\newcommand{\latentdist}{p_{\btheta}(\bz \mid \bm{A} \bm{z} = \bm{k} )}
\newcommand{\vv}[1]{\boldsymbol{#1}}
\newcommand{\lone}{L1\xspace}
\newcommand{\ltwo}{L2\xspace}
\newcommand{\x}{\ensuremath{\vv{x}}\xspace}
\newcommand{\y}{\ensuremath{\vv{y}}\xspace}
\newcommand{\yhat}{\ensuremath{\hat{\vv{y}}}\xspace}
\newcommand{\z}{\ensuremath{\vv{z}}\xspace}
\newcommand{\X}{\ensuremath{\mathcal{X}}\xspace}
\newcommand{\Y}{\ensuremath{\mathcal{Y}}\xspace}
\newcommand{\Z}{\ensuremath{\mathcal{Z}}\xspace}
\newcommand{\Logits}{\ensuremath{\Theta}\xspace}
\newcommand{\E}{\ensuremath{\mathbb{E}}\xspace}
\newcommand{\R}{\ensuremath{\mathbb{R}}\xspace}
\newcommand{\Loss}{\ensuremath{L}\xspace}
\newcommand{\loss}{\ensuremath{\ell}\xspace}
\newcommand{\logits}{\ensuremath{\vv{\theta}}\xspace}
\newcommand{\vparam}{\ensuremath{\vv{v}}\xspace}
\newcommand{\uparam}{\ensuremath{\vv{u}}\xspace}
\newcommand{\params}{\ensuremath{\vv{\omega}}\xspace}
\newcommand{\encoder}{\ensuremath{h_{\vparam}}\xspace}
\newcommand{\decoder}{\ensuremath{f_{\uparam}}\xspace}
\newcommand{\unnormp}{\ensuremath{p_{\btheta}\parentheses{\boldsymbol{z}}\xspace}}
\newcommand{\subsetp}{\ensuremath{ p_{\btheta}\parentheses{\boldsymbol{z} \mid \textstyle \bm{A} \bm{z} = \bm{k}}\xspace}}
\newcommand{\wrt}{w.r.t.\ }
\newcommand{\equality}[1]{\bm{A} \bm{#1} = \bm{k}}
\newcommand{\guy}[1]{}
\newcommand{\zz}[1]{}
\theoremstyle{plain}
\newtheorem{theorem}{Theorem}[section]
\newtheorem{proposition}[theorem]{Proposition}
\newtheorem{lemma}[theorem]{Lemma}
\theoremstyle{definition}
\theoremstyle{remark}
\DeclarePairedDelimiterX{\infdivx}[2]{(}{)}{%
  #1\;\delimsize|\delimsize|\;#2%
}
\DeclarePairedDelimiter\parentheses{\lparen}{\rparen}
\DeclarePairedDelimiter\floor{\lfloor}{\rfloor}
\begin{document}

\twocolumn[
\icmltitle{Deep Generative Models with Hard Linear Equality Constraints
}

\begin{icmlauthorlist}
\icmlauthor{Ruoyan Li}{UCLA Math}
\icmlauthor{Dipti Ranjan Sahu}{UCLA CS}
\icmlauthor{Guy Van den Broeck}{UCLA CS}
\icmlauthor{Zhe Zeng}{NYU CS}
\end{icmlauthorlist}

\icmlaffiliation{UCLA Math}{Department of Mathematics, University of California, Los Angeles}
\icmlaffiliation{UCLA CS}{Computer Science Department, University of California, Los Angeles}
\icmlaffiliation{NYU CS}{Computer Science Department, New York University}

\icmlcorrespondingauthor{Ruoyan Li}{liruoyan2002@g.ucla.edu}

\icmlkeywords{Machine Learning, ICML}

\vskip 0.3in
]

\printAffiliationsAndNotice{}  

\begin{abstract}
While deep generative models~(DGMs) have demonstrated remarkable success in capturing complex data distributions, they consistently fail to learn constraints that encode domain knowledge and thus require constraint integration. 
Existing solutions to this challenge have primarily relied on heuristic methods and often ignore the underlying data distribution, harming the generative performance.
In this work, we propose a probabilistically sound approach for enforcing the hard constraints into DGMs 
to generate constraint-compliant and realistic data.
This is achieved by our proposed gradient estimators that allow the constrained distribution, the data distribution conditioned on constraints, to be differentiably learned.
We carry out extensive experiments with various DGM model architectures over five image datasets and three scientific applications in which domain knowledge is governed by linear equality constraints.
We validate that the standard DGMs almost surely generate data violating the constraints.
Among all the constraint integration strategies,
ours not only guarantees the satisfaction of constraints in generation but also archives superior generative performance than the other methods across every benchmark.
\end{abstract}

\section{Introduction}

\begin{figure}[t]
    \centering
    \includegraphics[width=0.95\columnwidth]{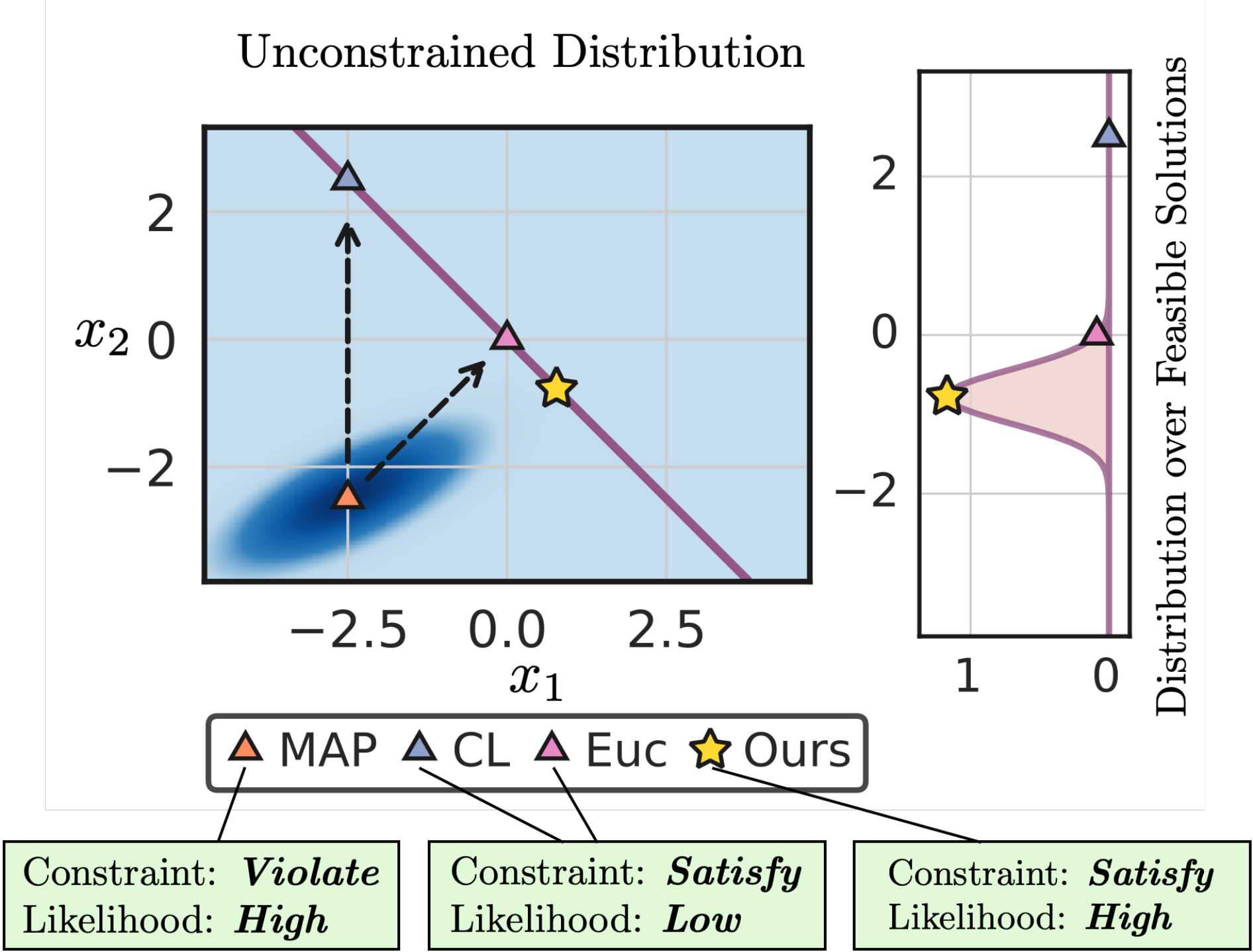}
    \caption{Comparison of different methods for generating samples that satisfy linear equality constraints. The left panel shows the original unconstrained distribution in a $2$-dimensional plane, with the purple line representing the constraint $x_1 + x_2 = 0$. 
    Our proposed method generates the most realistic sample as indicated by the right figure,
    outperforming existing methods that optimize for L1 distance (CL) and L2 distance (Euc).
    }\label{fig:intro}
\end{figure}

Deep generative models (DGMs) have made great progress in generating realistic data by capturing the underlying patterns and distributions of a dataset. Simultaneously, researchers have leveraged
machine-learning techniques to accelerate scientific discoveries and simulate complex systems. However, they have been found to struggle with learning the domain knowledge~\citep{zhang2023paradox}.
For example, if a chemist trains a model on a charge-neutral molecule dataset for predicting charges of each atom in a given molecule, they  want the predicted charges to sum up to zero, satisfying the charge neutrality property as background knowledge. This is a true concern for chemists in \citet{raza2020message} as they find state-of-the-art models almost surely generate predictions that violate this property and thus are useless for downstream tasks. 
The ability to incorporate domain knowledge into generative modeling remains crucial for the broad application of AI, particularly in scientific domains.

Domain knowledge such as the one shown above takes the form of \emph{linear-equality constraints}, an important class of constraints that have been studied by many given their wide applications. Another example is the mass balance and stoichiometry in chemical engineering whose processes are governed by linear equality constraints~\citep{chen2024hard}.
Such constraints are also necessary for stock investment allocation in financial engineering~\citep{zhang2020deep, butler2021integrating}.
While this list can be made longer, it is surprisingly challenging to enforce these seemingly simple constraints into DGMs,
limiting their application in scenarios where violations of constraints are unacceptable.

\begin{figure*}[t]
    \centering
    \includegraphics[width=0.85\textwidth]{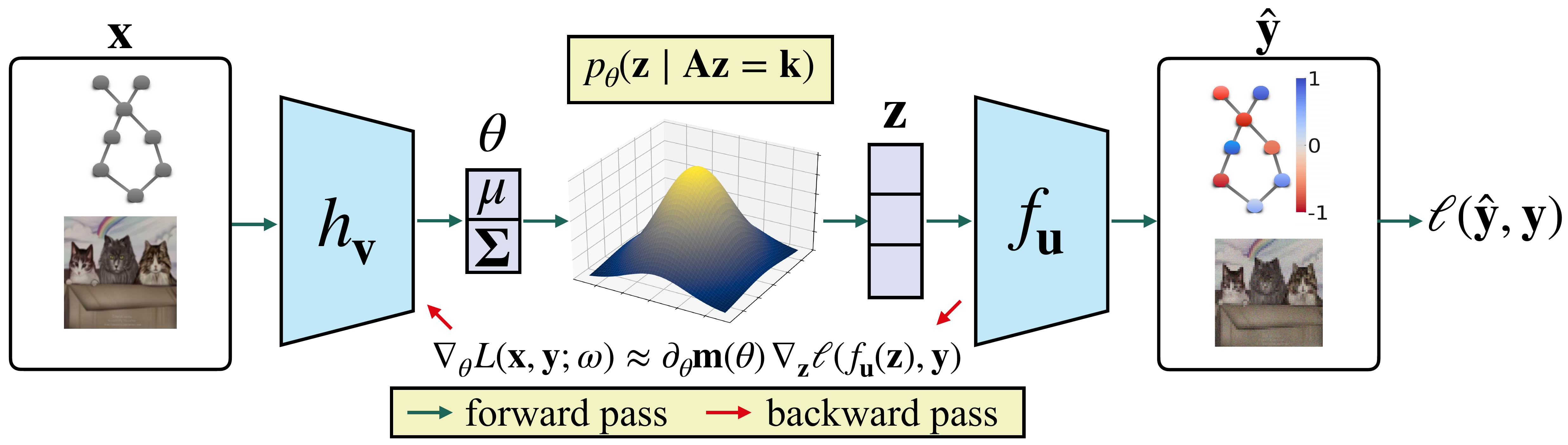}
    \caption{
    The constrained model considered in this work.
    It involves an encoder  $\encoder$ that outputs $\btheta$ to parameterize a latent distribution constrained by the linear equality constraint $\bm{A} \bm{z} = \bm{k}$. We first study when the objective admits a closed-form expression such that standard training is amenable. 
    We further propose and study various gradient estimators for the general case
    by combining exact sampling in the forward pass and gradient approximations in the backward pass.
    }
    \label{fig:problem-setting}
\end{figure*}

Incorporating constraints directly into the differentiable learning process can improve both generalization and data efficiency, compared to imposing them only in a post-processing step.
Existing methods for enforcing such constraints more or less share the same philosophy:
first generating a candidate prediction by 
sampling from the unconstrained distribution, 
which is almost certainly to violate the constraints,
and then making \emph{minimal changes} to the candidate such that it satisfies the constraint and is returned as the final prediction.
They differ in the \emph{heuristics} proposed for deriving minimal changes and accordingly the \emph{gradient estimators} for differentiating through the samples.
For example, given a candidate sample $\bm{x}$, \citet{stoian2024realistic} propose a 
\textit{Constraint Layer} (CL) that incrementally updates $i$-th feature $\bm{x}_i$ to return a sample $\tilde{\bm{x}}$ that satisfies the constraints and minimize the $\lone$ distance between ${\bm{x}}$ and $\tilde{\bm{x}}$ while \citet{chen2024hard} (Euc) propose to use $\ltwo$ distance and formulate such projection as quadratic programming problems.
We visualize these baselines works at deployment in Figure~\ref{fig:intro} where the arrows indicate the minimal changes to a given maximum a posteriori (MAP) solution as a candidate.
We observe that while these predictions are obtained by making minimal perturbations to the MAP solution, they result in low-likelihood constrained samples.
This is because these transformations only consider sample distances but disregard the underlying data distribution learned by DGMs.

In this work, we present a probabilistically sound framework to incorporate the hard linear equality constraints into the DGMs based on a simple yet effective idea: instead of \emph{transforming the sample}, we propose to \emph{transform the distribution}.
While the baseline methods consider first sampling and then constrain the samples,
we propose to first constrain the distribution to the feasible space satisfying the constraints, and then sample from the constrained distribution.
For example, in Figure~\ref{fig:intro}, we generate our sample by first deriving the distribution on the right figure and then taking the MAP of the constrained distribution.
The benefits of this approach are two-fold:
the resulting samples are guaranteed to satisfy the constraints and meanwhile, they closely resemble the true data with high likelihood.

Specifically, we make the following \textbf{contributions}.
% gradient estimator
(i) We propose and compare several gradient estimators that allow the end-to-end training of such constrained distributions. We further validate that one specific design of gradient estimator is significantly more effective than the others.
% flexibility
(ii) We demonstrate that our approach is flexible in two key ways: it is agnostic to the DGM architectures, making it applicable across a wide range of models; also, it allows constraints to be seamlessly integrated into any layer of the DGM.
% evaluation
(iii) To conduct extensive empirical evaluations, we compare the different constraint enforcement approaches across five image datasets--where the brightness is governed by the constraints--and three scientific applications in which domain knowledge takes the form of linear constraints,
involving multiple DGM variants from different model classes: VAEs, diffusion models and graph neural networks. 
% performance
(iv)
We show that standard DGMs always fail to learn the constraint, underscoring the need for explicit constraint enforcement to generate realistic samples compliant with the domain knowledge.
(v)
We further demonstrate that our approach consistently achieves better generative performance than all baseline methods across every benchmark.
\emph{Overall, our approach paves the way for a principled design of integrating constraints into DGMs, enabling the generation of constraint-compliant, realistic samples.}

\section{Related Work}
Our work lies in the field of neuro-symbolic AI where integrating constraints as background knowledge into deep learning models is widely studied~\citep{garcez2023neurosymbolic}. 

\textbf{Constraint Enforcement.}
There are multiple ways to enforce constraints in the neural networks.
Some directly incorporate background knowledge as network layers~\citep{ahmed2022semantic, giunchiglia2021multi}.
In the context of generative tasks, \citet{di2020efficient} embed propositional logic constraints into Generative Adversarial Networks (GANs) for structured object generation, while \citet{misino2022vael} integrate probabilistic logic programming \citep{de2007problog} with Variational Autoencoders (VAEs). 
% \zz{TODO: to fix, fixed}
\citet{Hendriks2020LinearlyCN} impose linear operator constraints within the architecture of feedforward neural networks, but their approach does not generalize to other model architectures. 
\citet{WangICML23} enforce positive linear constraints using a Sinkhorn algorithm, where their method is only applicable to output variables being unit hypercubes. 
\citet{chen2024hard} formulate constraint satisfaction as an optimization problem. They use the $\ltwo$ distance and derive projections based on the Karush–Kuhn–Tucker (KKT) conditions. 
% Meanwhile, 
\citet{amos2017optnet} and \citet{donti2017task} integrate quadratic programming solvers as differentiable modules within end-to-end trainable deep networks
while some other works formulate it as submodular optimization problems~\citet{Djolonga2017}, \citet{Tschiatschek2018DifferentiableSM}, and \citet{wilder2019melding}. 
Most of these approaches ignore underlying data distributions when solving optimization problems
while in our method we leverage the information from the constrained distribution for optimizing the DGMs.
Quantitative comparisons between our method and existing work are further presented in the experimental section.

\textbf{Constrained Sampling.}
There are some existing works in the field of statistical modeling that study how to sample from linearly constrained Gaussian distributions that can be potentially applied as post-processing steps.
For example, \citet{risks6030064} investigates a linear weighted constraint for independent standard Gaussian variables, while \citet{LAMBONI2022199} focuses on a fixed-sum constraints for independent Gaussian variables with zero means.
However, these methods are not differentiable and thus cannot be incorporated into the training of DGMs, 
leading to low data efficiency.

\textbf{Exactly-k Constraints.}
The discrete counterpart of linear equality constraints, 
the exactly-k constraints defined as $\sum_i x_i = k$ with $x_i$ being categorical variables is studied by many.
% Regarding the topic of estimating gradients, 
\citet{maddison2016concrete} and \citet{jang2016categorical} propose similar ideas to refactor the non-differentiable sample from a categorical distribution with a differentiable sample from Gumbel-Softmax distributions. 
Other gradient estimators for this constraint
either employ variants of score function and straight-through estimator or propose certain relaxations~\citep{kim2016exact,chen2018learning,grover2019stochastic,Sang2019reparameterizable}. 
Closely related to our work is
a recently introduced gradient estimator~\citep{ahmed2022simple} that leverages the
constrained marginal distribution as an informative proxy for differentiation.

\textbf{Soft Constraints.}
A line of research integrates the constraints by optimizing for the probability of constraint satisfaction, encouraging the model to generate compliant samples~\citep{diligenti2012bridging,xu2018semantic,fischer2019dl2,badreddine2022logic,stoian2024exploiting,shukla2024unified}.
This is achieved by modifying the loss function with differentiable constraint probabilities.
However, these methods do not guarantee the satisfaction of the constraint.

\section{Problem Statement}
\label{sec:Problem Statement}
% model + introduction to simple
Instead of sampling from unconstrained DGM $\unnormp$ using heuristics,
our goal is to 
% build
% Consider 
% a constrained model 
constrain the DGM
as below
% described by the equations 
\begin{equation}
\label{eqn:pipeline}
    \logits 
    = \encoder(\x), 
    \quad \z \sim 
    \subsetp, 
    \quad \yhat = \decoder(\z),
\end{equation}
where $\x \in \X$ and $\yhat \in \Y$ denote feature inputs and target outputs, respectively, $\encoder: \X \rightarrow
\Logits$ and $\decoder: \Z \rightarrow \Y$ are smooth, parameterized mappings.
Parameters $\btheta$ induce a Gaussian distribution $\mathcal{N} \left(\boldsymbol{\mu}, \boldsymbol{\Sigma} \right)$ over the latent variables $\z$ where parameters $\btheta = (\bm{\mu}, \bm{\Sigma})$ consist of
the mean vector $\boldsymbol{\mu} \in \R^{n}$ and the covariance matrix $\boldsymbol{\Sigma} \in \R^{n \times n}$.
That is, $\z$ has its probability density function (p.d.f.) defined as 
$\unnormp = \frac{1}{(2\pi)^{n/2} |\boldsymbol{\Sigma}_{\btheta}|^{1/2}} \exp\left(-\frac{1}{2} (\bm{z} - \boldsymbol{\mu}_{\btheta})^\top \boldsymbol{\Sigma}_{\btheta}^{-1} (\bm{z} - \boldsymbol{\mu}_{\btheta})\right)$. $\equality{z}$ denotes the linear equality constraints with $\bm{A} \in \R^{a \times n}$, $\text{rank} (\bm{A}) = a \leq n$, and $\bm{k} \in \R^{a}$, enforced over the DGM $\unnormp$ inducing a constrained distribution $\subsetp$.

This formulation is general and it subsumes various DGM classes that integrate the linear equality constraint in 
\textit{1) output} (when the mapping $\decoder$ is the identity function), where the goal of constrained generative modeling is to learn the parameters $\btheta$ such that the constrained  distribution 
% $\subsetp$ 
approximates the underlying data distribution;
or \textit{2) latent space}, where the constrained generative modeling learns a constrained posterior distribution 
% $\subsetp$ 
$p_{\btheta}$
over latent variables.
The training of this model is by optimizing an expected loss:
\begin{equation}
\label{eqn:Loss}
\begin{split}
    &\Loss(\x, \y; \params) = \E_{\z \sim p_{\logits}(\z \mid \bm{A} \z = \bm{k})}[\loss(\decoder(\z), \y)]\\ 
    &\textit{with~~} \params = (\vparam, \uparam) \textit{~~and~~} \logits = \encoder(\x),
\end{split}
\end{equation}

where $\loss: \Y \times \Y \rightarrow \R^+$ is a point-wise loss function.
Figure~\ref{fig:problem-setting} shows
a visualization of the pipeline.

\section{Gradient Estimation for Linear Equality}

\begin{table*}[t]
    \centering
    \caption{Summary of gradient estimators. 
      The first block presents baseline estimators and the second block presents our proposed ones.
      In the forward pass, we sample exactly from the constrained distribution 
      (Proposition~\ref{prop: Gaussian Constrained Distribution}). 
      In the backward pass, we use $\marginal(\btheta)$ as a differentiable proxy. 
      For \textit{Constrained Layer} and \textit{Constrained Reparametrization}, 
      the samples for deriving the gradient estimations are generated using the unconstrained reparametrization trick.}
    \begin{adjustbox}{max width=0.95\textwidth}
    \begin{tabular}{l l l}
    \toprule
    \textbf{\textsc{Gradient Estimator}}
      & \textbf{\textsc{Proxy $\marginal(\btheta)$}}
      & \textbf{\textsc{Description}} \\
    \midrule
    %––––––––––––– Baseline Estimators –––––––––––––
    \rowcolor{gray!15}
    Random
      & --
      & Sample a random gradient from $\mathcal{N}(\bm{0}, \bm{I})$. \\
    Unconstrained Marginal
      & $p_{\btheta}(z_i)$
      & p.d.f.\ of unconstrained $\z$ as a proxy for $\z$. \\
    \rowcolor{gray!15}
    Constrained Layer
      & $\mathrm{CL}\!\bigl(\bm{\mu} + \bm{\sigma}\!\odot\!\bm{\epsilon}\bigr)$
      & Use reparametrization trick as a proxy for $\z$. \\
    \rowcolor{gray!15}
      & 
      & Constrained Layer enforces $\bm{A}\z = \bm{k}$. \\
    \midrule
    %––––––––––––– Proposed Estimators –––––––––––––
    \rowcolor{gray!15}
    Constrained Reparametrization
      & $\bm{\hat{z}} = \bm{\mu} + \bm{\sigma}\!\odot\!\bm{\epsilon}$
      & Apply variance-weighted correction. \\
    \rowcolor{gray!15}
      & $\displaystyle \z = \bm{\hat{z}}
         \;+\; \bm{\Sigma}\,\bm{A}\,\bigl(\bm{A}\,\bm{\Sigma}\,\bm{A}^T\bigr)^{-1}
         \bigl(\bm{k} - \bm{A}\,\bm{\hat{z}}\bigr)$
      & Ensures $\bm{A}\z = \bm{k}$. \\
    Constrained Marginal
      & $p_{\btheta}\!\bigl(z_i \mid \bm{A}\,\z = \bm{k}\bigr)$
      & p.d.f.\ of conditional marginals as a proxy for $\z$. \\
    \rowcolor{gray!15}
    Marginal Expectation
      & $\mathbb{E}_{\,z_i\sim p_{\btheta}(\,z_i\mid \bm{A}\z=\bm{k}\,)}[\,z_i\,]$
      & Expectation of conditional marginals as a proxy for $\z$. \\
    \bottomrule
    \end{tabular}
    \end{adjustbox}
    \label{tab:estimator summary}
\end{table*}

Standard auto-differentiation can not be directly applied to the expected loss due to two main obstacles. First, for the gradient of the expected loss $L$ \wrt parameters $\uparam$ in the decoder mapping $\decoder$, which is defined as 
\begin{equation}
% \label{eqn:grad-L-dec}
    \scalebox{0.9}{$
        \nabla_{\uparam} \Loss(\x, \y; \params) = \E_{\z \sim \latentdist} \partial_{\uparam} \decoder(\z, \x)^\top \nabla_{\yhat} \loss(\yhat, \y)
    $}
\end{equation}
with $\hat{y} = \decoder(\z)$ being the decoding of a latent sample $\z$,
this expectation does not allow closed-form solution in general and requires Monte-Carlo estimations by sampling $\z$ from the constrained distribution $p_{\logits}(\z \mid \bm{A} \bm{z} = \bm{k})$.
Another issue arises in the gradient of $L$ \wrt parameters $\vparam$ in the encoder mapping which is defined as 
\begin{equation}\label{eqn:grad-L-enc}
    \nabla_{\vparam} \Loss(\x, \y; \params) = \partial_{\vparam} \encoder(\x)^\top \nabla_{\logits} \Loss(\x, \y; \params).
\end{equation}
The obstacle lies in the computation of the gradient of the expected loss $L$ \wrt $\btheta$
% as in Equation~\ref{eqn:grad-L-enc} 
defined as $\nabla_{\logits} \Loss(\x, \y; \params) := \nabla_{\logits} \E_{\z \sim p_{\logits}(\z \mid \bm{A} \bm{z} = \bm{k})}[\loss(\decoder(\z, \x), \yhat)]$ which requires gradient estimators.
In this section, we tackle the gradient estimation for the linear equality constraint by solving the aforementioned two subproblems:
\textbf{(P1)} how to \textit{sample exactly} from the constrained distribution $p_{\btheta}(\z \mid \bm{A} \bm{z} = \bm{k})$ and \textbf{(P2)} how to \textit{estimate} $\nabla_{\logits} \Loss(\x, \y; \params)$. 
Solutions to these two subproblems, when combined, allow us to train the constrained models in an end-to-end manner. 
For \textbf{(P1)}, we observe that the constrained distribution $p_{\btheta}(\z \mid \bm{A} \bm{z} = \bm{k})$ is a multivariate Gaussian distribution and thus performing exact sampling is straightforward as long as we derive the parameters for the constrained distribution. We formally state this observation in Appendix~\ref{proof:SystemofLinearEqualityConstrainedDistribution}.
In the following, we present the various design choices as candidate solutions to \textbf{(P2)}.

\subsection{Gradient Estimator Design}

The reparameterization trick \citep{kingma2022autoencoding}
is perhaps the most commonly used technique for differentiating through random samples.
Specifically, it expresses a sample $\bm{z}$ as $\bm{z} = \bm{\mu} + \bm{\sigma} \odot \bm{\epsilon}$, where $\bm{\epsilon} \sim N(\bm{0}, \bm{I})$ and $\bm{\mu}$ and $\bm{\sigma}$ are mean and standard deviation, respectively.
However, when it is directly applied to the constrained DGMs, it simply ignores the constraint information:
even though the sample $\bm{z}$ is drawn from the feasible space that satisfies the constraints,
the addition of the random noise results in violation of the constraints and thus the derived gradient is for optimizing an unconstrained model.

Instead, we propose novel ways to build gradient estimators that are able to leverage the constraint information and effectively optimize the constrained models.
We first propose an approximation to the problematic term in Equation~\ref{eqn:grad-L-enc} as
\begin{equation}
    \grad{\btheta}{L(\x, \y; \bomega)}
    \approx \partial_{\btheta} \marginal(\btheta) \grad{\z}{\ell(\x, \y; \bomega)},
\end{equation}
where $\marginal(\btheta)$ should be chosen as a function that can be efficiently computed and differentiated and meanwhile encode constraint information.
% This formulation is inspired by the 
Here, we consider two candidates for $\marginal(\btheta)$:
1) the conditional marginal probability density $p_{\btheta}( z_i| \bm{A} \z = \bm{k})$;
and 2) the expectation of $z_i$ under the conditional marginal, that is,
$\E_{z_i \sim p_{\btheta}(z_i | \bm{A} \z = \bm{k})}[z_i]$.
The intuition behind the adoption of these marginal distributions is that, by conditioning on the constraints, their gradients provide a differentiable proxy for optimizing the constrained distribution. It encourages the constrained model to generate constraint-compliant samples with low loss, allowing for efficient end-to-end training of DGMs.

While these two quantities seem to encode similar information,
we make an interesting observation in our empirical study that in continuous domains, the use of expectation is consistently more effective than conditional marginals.
We further provide a baseline estimator that chooses $\marginal(\btheta)$ to be the unconstrained marginals  $p_{\btheta}(z_i)$, meaning that the constraint is ignored during the training process; empirical results show that such ignorance can harm model performance even though the constraint is enforced at inference.

The remaining question is how to compute and differentiate $\marginal(\btheta)$ for these two different estimators.
We present below the theoretical results to show that constrained marginals and their expectations admit closed-form representation and thus allow efficient computations.

\begin{proposition}[Gaussian Conditional Marginal and Expectations]\label{prop: Gaussian Conditional Marginal}
    Given $\z = \left( z_1, \ldots, z_n \right)^T \sim \mathcal{N} \left( \bm{\mu}, \bm{\Sigma} \right)$, the conditional marginal $p_{\logits}(z_i \mid \bm{A} \z = \bm{k})$ follows a univariate Gaussian distribution with mean $\overline{\mu}_i = \mu_i + \bm{e}_i^T \bm{\Sigma} \bm{A} \left( \bm{A} \bm{\Sigma} \bm{A}^T \right)^{-1} \left( \bm{k} - \bm{A} \bm{\mu} \right)$ and variance $\overline{\sigma}_i^2 = \bm{e}_i^T \bm{\Sigma} \bm{e}_i - \bm{e}_i^T \bm{\Sigma} \bm{A}^T \left( \bm{A} \bm{\Sigma} \bm{A}^T \right)^{-1} \bm{A} \bm{\Sigma} \bm{e}_i$.
    Further, the expectation of the marginal distribution is $\overline{\mu}_i$.
\end{proposition}

\begin{figure}[t]
    \centering
    \begin{subfigure}
        \centering
        \includegraphics[height=0.125\textheight]{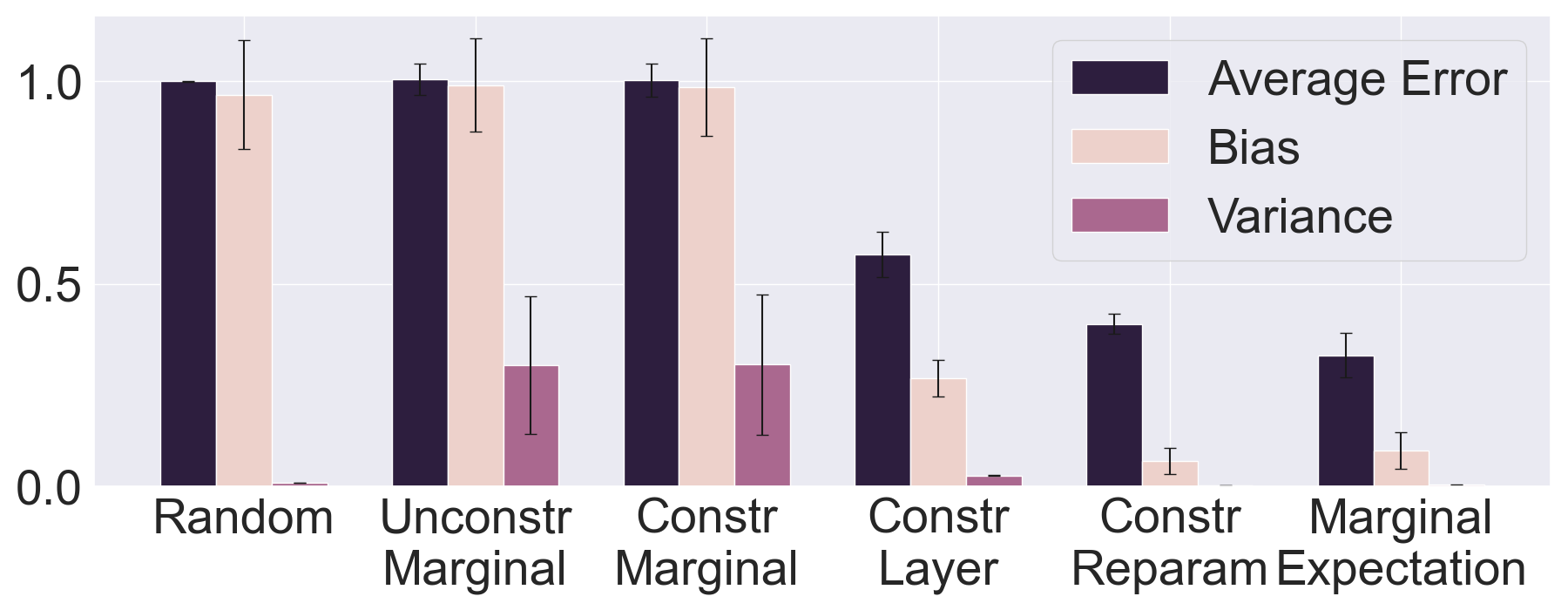}
        % \caption{L1 Loss}
        % \label{fig:gradient estimator l1}
    \end{subfigure}
    \hspace{0.08\textwidth}
    \begin{subfigure}
        \centering
        \includegraphics[height=0.125\textheight]{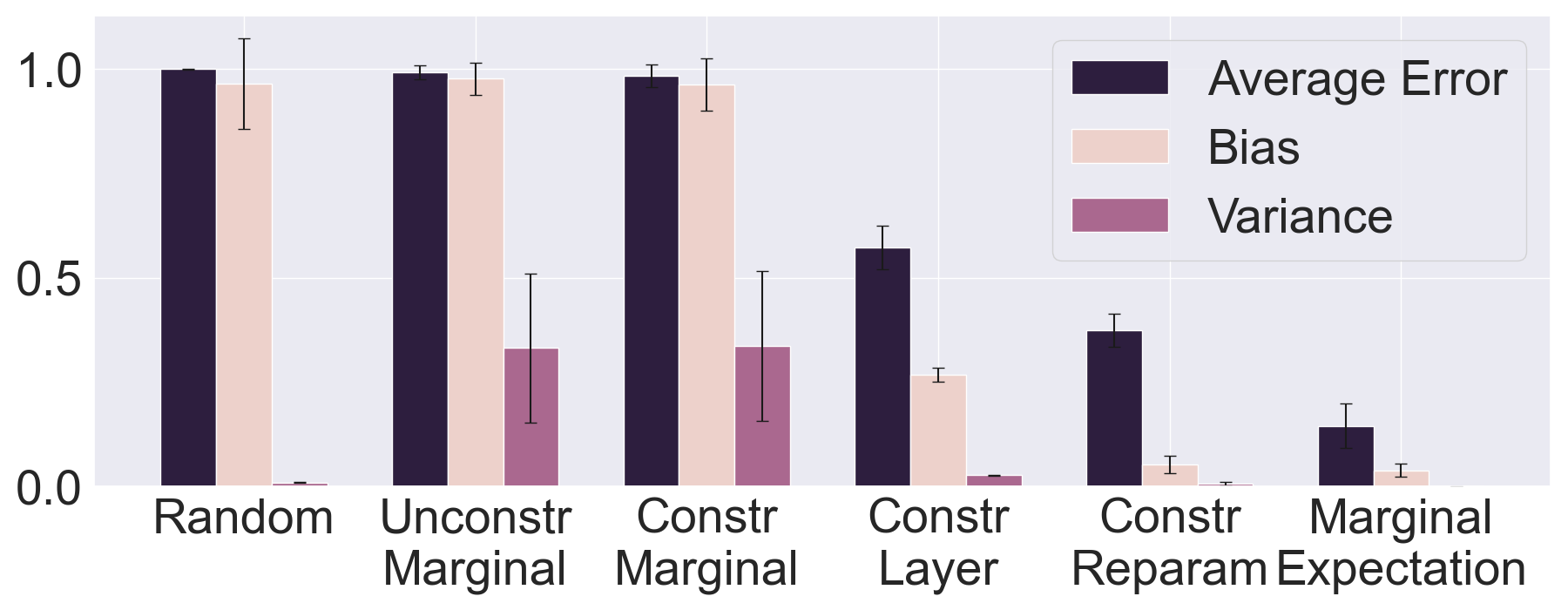}
        % \caption{L2 Loss}
        % \label{fig:gradient estimator l2}
    \end{subfigure}
    \caption{Comparisons of gradient estimators for point-wise loss $\ell$ being L1 loss (upper plot) and L2 loss (lower plot) applied to Gaussian variable are conducted. To compare the directions of the estimated and ground-truth gradients, we utilize the cosine distance. The bias, variance, and error of the gradient estimators are measured using a sample size of $10,000$.
    }
    \label{fig:Synthetic}
\end{figure}

\subsection{Comparison of Gradient Estimators}

We present a rigorous comparison of all aforementioned gradient estimator designs summarized in Table~\ref{tab:estimator summary}: 
we consider a synthetic setting where the ground truth gradients can be obtained by taking derivatives of a closed-form expected loss which will be described in Section~\ref{sec: Closed-form Expected Loss} such that we can compare how good the gradient estimations are for each estimator. 
The distance between the estimated and the ground truth gradient vectors is measured by cosine distance, defined as (1 -- cosine similarity). We evaluate the performance of gradient estimators on three metrics: bias, variance, and average error.

In addition to the two gradient estimators proposed in the previous sections, \textit{Constrained Marginal}, and \textit{Marginal Expectation}, 
we further propose a modified version of the reparameterization trick to expand the spectrum of estimator design. In this approach, unconstrained samples are initially generated using reparameterization trick, followed by a variance-weighted correction strategy to enforce constraints.
We further include three baseline estimators, \textit{Random}, \textit{Unconstrained Marginal}, \textit{Constrained Layer} as defined in Table~\ref{tab:estimator summary}. While the Constrained Layer introduced by ~\citet{stoian2024realistic} cannot be directly utilized as a gradient estimator, we integrate it with the reparameterization trick. Specifically, we utilize the reparameterization trick to facilitate backpropagation through the random sampling process and \textit{Constrained Layer} to enforce equality constraints. 

Results are shown in Figure~\ref{fig:Synthetic},
where \textit{Marginal Expectation} significantly outperforms the others in all cases.
\textit{Unconstrained Marginal} has similar performances to \textit{Random} which is expected since it discards the constraint information.
What is interesting is that \textit{Constrained Marginal}, even though it is informed by constraint,
it also performs as bad as \textit{Random} in terms of average error and bias.
In the Bernoulli setting, 
\textit{Marginal Expectation} and
\textit{Constrained Marginal} are the same estimator as shown in~\citet{ahmed2022simple}
while we show that
in the Gaussian setting,
the former is capable of providing decent gradient approximations while the latter is not. We refer the readers to Appendix~\ref{appendix:synthetic} for additional experimental details.

\section{Closed-Form Expected Loss}
\label{sec: Closed-form Expected Loss}

In this section, we turn to an opposite direction to explore when gradient estimators are not necessary.
It holds when the expected loss in Equation~\ref{eqn:Loss} admits closed-form expressions, allowing standard training to be applied and thus no gradient estimation is needed. 
For such cases to hold,
% This section presents our theoretical results on characterizing when such closed forms are available. 
the first assumption we make is that the mapping $\decoder$ is an identity function, that is, $\yhat = \z$, as it can introduce high non-linearity. 
% that easily makes a closed-form impossible. 
Then we show that when the element-wise loss $\ell$ is the L1 or L2 loss, the expected loss admits a closed-form expression as below.

\begin{proposition}[Gaussian Closed-form Expected Loss]\label{prop:closed form gaussian} 
    Let $\z \sim \mathcal{N} \left( \bm{\mu}, \bm{\Sigma} \right)$. 
    Let $\bm{y} = \left( y_1, \ldots, y_n \right)^T$ be the ground truth vector subject to the equality constraint $\bm{A} \bm{z} = \bm{k}$. 
    Then it holds that
    \begin{enumerate}[label=\roman*),noitemsep,topsep=0pt]
        \item when $\ell$ is L1 loss,
        $L(\btheta)$ has closed form 
        
        % \begin{align*}
        $
        \sum_{i=1}^n 
\overline{\bm{\Sigma}}_{i,i} \sqrt{\frac{2}{\pi}} 
e^{-\frac{(\overline{\bm{\mu}}_i - y_i)^2}{2 \overline{\bm{\Sigma}}_{i,i}^2}}
+ 
(\overline{\bm{\mu}}_i - y_i) \,\textit{erf} \left( \frac{\overline{\bm{\mu}}_i - y_i}{\sqrt{2} \, \overline{\bm{\Sigma}}_{i,i}} \right)
    % \end{align*}
    $;
        \item when $\ell$ is L2 loss,
        $
        % \begin{align*}
        \sum_{i=1}^n
            \overline{\bm{\mu}}_i^2 + \overline{\bm{\Sigma}}_{i,i}^2
            -
            2 y_i \overline{\bm{\mu}}_i
            +
            y_i^2$,
    % \end{align*}
    \end{enumerate}
    where $\overline{\bm{\mu}}$ and $\overline{\bm{\Sigma}}$ are defined above.
\end{proposition}
Later we will empirically show that when it is possible to derive the closed-form expected loss, it can leads to state-of-the-art generative performance.

\section{Experiments}
We conduct a comprehensive empirical evaluation 
to explore to what extent our proposed method leads to improved generative performance while providing guarantees on constraint satisfaction on both image generation benchmarks and scientific applications.

\subsection{VAE with Constrained Latent Space}
\label{sec:VAE_with_Constrained_Latent_Space}

To demonstrate the flexibility of our proposed gradient estimator, we consider an experiment setup where the VAE model has its latent space constrained by linear equality as regularization.
The VAE is trained on the MNIST dataset using the evidence lower bound~(ELBO) as objective, which consists of a reconstruction loss~(RL) and the KL divergence between a constrained approximate posterior $p_{\btheta} \left( \z \mid \bm{A} \z = \bm{k}, \x \right)$ and a prior of the latent space. 
The generative performance is evaluated using test negative likelihood, estimated using importance sampling ~\citep{burda2016importance}, negative ELBO, and reconstruction loss.

Experiment results are presented in Figure ~\ref{fig:vae_constrained_latent_space} where the estimator
\textit{Marginal Expectation}
outperforms the other estimators in all three metrics, consistent with synthetic experimental results in Figure~\ref{fig:Synthetic}.
\textit{Unconstrained Marginal} and \textit{Constrained Marginal} have similar performance, both better than \textit{Random}. \textit{Constrained Reparametrization} and \textit{Constrained Layer} exhibit similar performance, but both trailing behind \textit{Marginal Expectation} by a noticeable margin.
In the following experiments, we adopt \textit{Marginal Expectation} as the default gradient estimator for our approach.

\begin{figure}[t]
    \centering
    \includegraphics[width=0.45\textwidth]{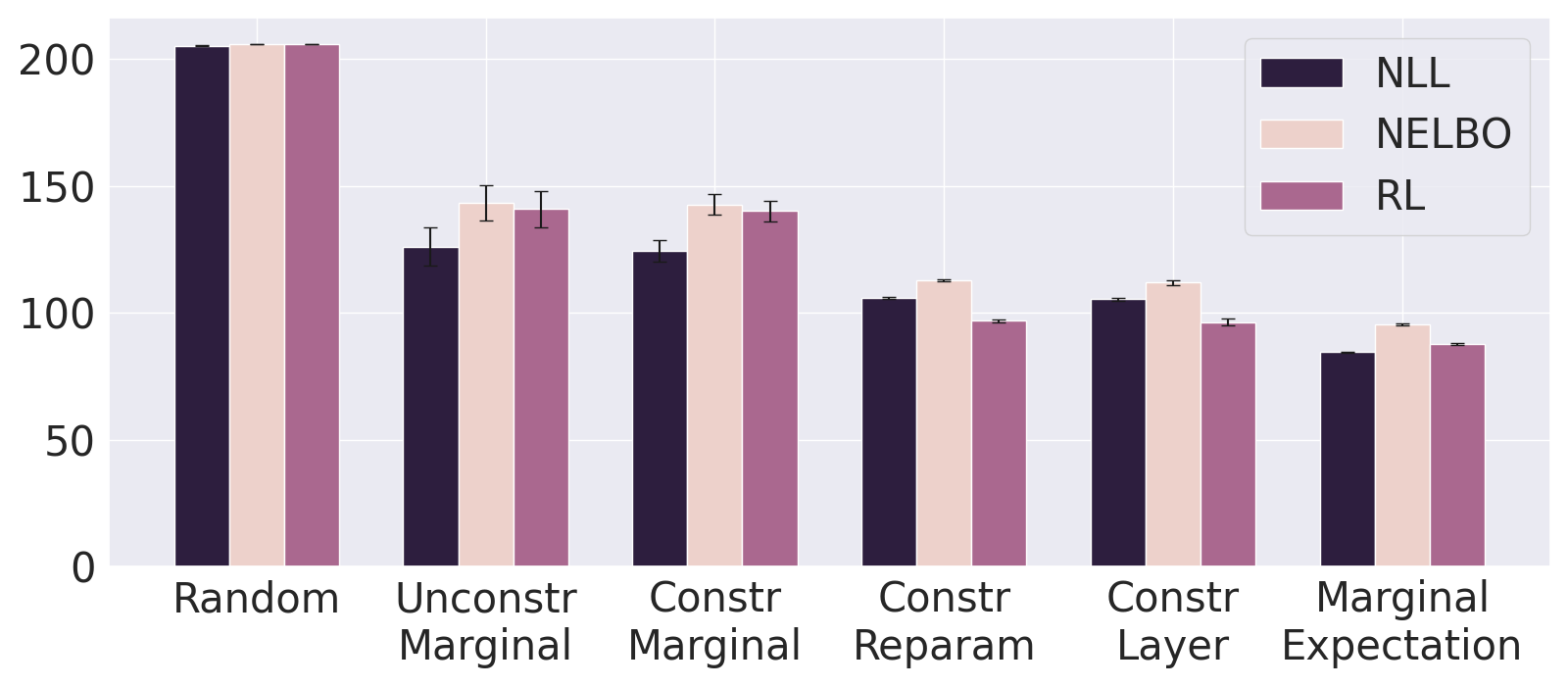}
    \caption{Comparison of gradient estimators for VAE with constrained latent space. Negative log-likelihood (NLL), negative ELBO (NELBO), and reconstruction loss (RL) are averaged over $5$ trials.}
    \label{fig:vae_constrained_latent_space}
\end{figure}

\subsection{Constrained Generation using VAE}
% \zz{TODO: put back this subsection}
We consider a setting where the underlying data distribution is constrained by linear equality as domain knowledge.

\textbf{Setup.} We modify MNIST dataset by standardizing overall brightness of each image using a linear equality constraint. Three VAE models are considered: Vanilla VAE \citep{kingma2022autoencoding}, Ladder VAE \citep{sønderby2016ladder}, and Graph VAE \citep{he2018variational}. We compare the performance of these models and their constrained counterparts integrated with linear equality using estimator \textit{Marginal Expectation} as it shows the best performance. We also compare the integration of \textit{Constrained Layer} with these VAE models.  We refer the readers to Appendix~\ref{appendix:vae_constrained_data_generation} for model implementations and data modification. Similar to Section~\ref{sec:VAE_with_Constrained_Latent_Space}, the performance is evaluated from test log-likelihood~(LL), ELBO, and Reconstruction Loss~(RL). We also measure constraint violation rate, which calculates the proportion of reconstructed samples that violate constraints.

\textbf{Reults.} We find out that the unconstrained VAEs have high constraint violation rates. On the contrary, our method can constrain the model such that their generated data satisfy the constraint while also achieving better generative performance due to the inductive bias. Although \textit{Constrained Layer} can precisely enforce the constraint, it significantly diminishes the generative capability. Additionally, we also show that the addition of \textit{Marginal Expectation} has basically no impact on speed. We present the results in Appendix~\ref{appendix:vae_constrained_data_generation}.

\begin{table}[t]
    \centering
    \captionsetup{skip=6pt}
    \caption{Comparison on VAE generative performance. The constrained VAE models achieve similar or better generative ability while strictly satisfying the constraints, whereas the unconstrained counterparts have a high constraint violation rate.}
    \scalebox{0.68}{ % Adjust the scaling factor as needed
    \begin{tabular}{l@{\hskip 4pt}r@{}l@{}r@{\hskip 8pt}r@{}l@{}r@{\hskip 8pt}r@{}l@{}r@{\hskip 8pt}r@{\hskip 1pt}l@{}r@{}l}
    \toprule
    \textbf{\textsc{Model}} & \multicolumn{3}{c}{\textbf{\textsc{LL}} $\uparrow$} & \multicolumn{3}{c}{\textbf{\textsc{ELBO}} $\uparrow$} & \multicolumn{3}{c}{\textbf{\textsc{RL}} $\downarrow$} & \multicolumn{3}{c}{\textbf{\textsc{Violation}} $\downarrow$} \\
    \midrule
    VAE & \text{-22.42} & \text{\hspace{2pt}$\pm$\hspace{2pt}} & \text{0.29} & \text{-23.41} & \text{\hspace{2pt}$\pm$\hspace{2pt}} & \text{0.22} & \text{15.00} & \text{\hspace{2pt}$\pm$\hspace{2pt}} & \text{0.46} & \makebox[1cm][r]{\text{0.30}} & \text{\hspace{2pt}$\pm$\hspace{2pt}} & \makebox[1cm][l]{\text{0.06}} \\
    VAE + CL & \text{-34.45} & \text{\hspace{2pt}$\pm$\hspace{2pt}} & \text{2.64} & \text{-40.89} & \text{\hspace{2pt}$\pm$\hspace{2pt}} & \text{9.37} & \text{37.11} & \text{\hspace{2pt}$\pm$\hspace{2pt}} & \text{9.35} & \makebox[1cm][r]{\textbf{0.00}} & \text{\hspace{2pt}$\pm$\hspace{2pt}} & \makebox[1cm][l]{\textbf{0.00}} \\
    ours & \textbf{-21.48} & \text{\hspace{2pt}$\pm$\hspace{2pt}} & \textbf{0.18} & \textbf{-22.62} & \text{\hspace{2pt}$\pm$\hspace{2pt}} & \textbf{0.07} & \textbf{12.79} & \text{\hspace{2pt}$\pm$\hspace{2pt}} & \textbf{0.11} & \makebox[1cm][r]{\textbf{0.00}} & \text{\hspace{2pt}$\pm$\hspace{2pt}} & \makebox[1cm][l]{\textbf{0.00}} \\
    \midrule
    Ladder VAE & \text{-24.25} & \text{\hspace{2pt}$\pm$\hspace{2pt}} & \text{0.07} & \text{-30.84} & \text{\hspace{2pt}$\pm$\hspace{2pt}} & \text{0.51} & \textbf{23.06} & \text{\hspace{2pt}$\pm$\hspace{2pt}} & \textbf{0.54} & \makebox[1cm][r]{\text{0.38}} & \text{\hspace{2pt}$\pm$\hspace{2pt}} & \makebox[1cm][l]{\text{0.02}} \\ 
    Ladder VAE + CL & \text{-36.83} & \text{\hspace{2pt}$\pm$\hspace{2pt}} & \text{0.56} & \text{-39.59} & \text{\hspace{2pt}$\pm$\hspace{2pt}} & \text{0.56} & \text{37.46} & \text{\hspace{2pt}$\pm$\hspace{2pt}} & \text{0.55} & \makebox[1cm][r]{\textbf{0.00}} & \text{\hspace{2pt}$\pm$\hspace{2pt}} & \makebox[1cm][l]{\textbf{0.00}} \\ 
    ours & \textbf{-23.86} & \text{\hspace{2pt}$\pm$\hspace{2pt}} & \textbf{0.06} & \textbf{-30.78} & \text{\hspace{2pt}$\pm$\hspace{2pt}} & \textbf{0.08} & \textbf{23.40} & \text{\hspace{2pt}$\pm$\hspace{2pt}} & \textbf{0.16} & \makebox[1cm][r]{\textbf{0.00}} & \text{\hspace{2pt}$\pm$\hspace{2pt}} & \makebox[1cm][l]{\textbf{0.00}} \\
    \midrule
    Graph VAE & \text{-22.74} & \text{\hspace{2pt}$\pm$\hspace{2pt}} & \text{0.11} & \text{-23.54} & \text{\hspace{2pt}$\pm$\hspace{2pt}} & \text{0.18} & \text{15.45} & \text{\hspace{2pt}$\pm$\hspace{2pt}} & \text{0.41} & \makebox[1cm][r]{\text{0.29}} & \text{\hspace{2pt}$\pm$\hspace{2pt}} & \makebox[1cm][l]{\text{0.09}} \\
    Graph VAE + CL & \text{-33.27} & \text{\hspace{2pt}$\pm$\hspace{2pt}} & \text{3.60} & \text{-33.27} & \text{\hspace{2pt}$\pm$\hspace{2pt}} & \text{5.84} & \text{28.29} & \text{\hspace{2pt}$\pm$\hspace{2pt}} & \text{6.40} & \makebox[1cm][r]{\textbf{0.00}} & \text{\hspace{2pt}$\pm$\hspace{2pt}} & \makebox[1cm][l]{\textbf{0.00}} \\
    ours & \textbf{-21.61} & \text{\hspace{2pt}$\pm$\hspace{2pt}} & \textbf{0.20} & \textbf{-22.53} & \text{\hspace{2pt}$\pm$\hspace{2pt}} & \textbf{0.06} & \textbf{12.73} & \text{\hspace{2pt}$\pm$\hspace{2pt}} & \textbf{0.21} & \makebox[1cm][r]{\textbf{0.00}} & \text{\hspace{2pt}$\pm$\hspace{2pt}} & \makebox[1cm][l]{\textbf{0.00}} \\
    \bottomrule
    \end{tabular}
    }
    \label{tab:VAEGenerative}
\end{table}

\subsection{Constrained Generation using Diffusion Models}
In this section, we consider a different DGM class, diffusion models~\citep{ho2020denoising, song2021denoising, song2019generative, lu2022dpm}.
We show that our method can be integrated into backward diffusion process and not only improve sample quality but also ensure constraint satisfaction. 

\textbf{Setup.}
We modify CIFAR 10 \citep{krizhevsky2009learning}, CelebA \citep{liu2015deep}, LSUN Church, and LSUN Cat \citep{yu2015lsun} datasets by standardizing the overall brightness of each image using linear equality constraints. We refer the readers to Appendix~\ref{appendix:Additional_Experimental_Detail_for_Diffusion_Model_Constrained_Data_Generation} for detailed modification of datasets. We evaluate the performance of diffusion models using three metrics: Fréchet Inception Distance (FID) \citep{heusel2017gans}, Inception Score (IS) \citep{salimans2016improved}, and the violation rate, which quantifies the proportion of generated samples that fail to satisfy the imposed constraints.

\textbf{DDPM.}
The model follows the standard DDPM \citep{ho2020denoising}, where a U-Net serves as the denoiser.  During training, Gaussian noise is incrementally added, and the U-Net is trained to predict and remove this noise by minimizing a reweighted variational lower bound. During inference, an iterative denoising process is performed using the learned U-Net to progressively generate samples from Gaussian noise by reversing diffusion. We incorporate our exact sampling methods into the backward diffusion process. Instead of predicting $p_{\theta} (\bm{x}_0 \mid \bm{x}_1)$, we generate $p_{\theta} (\bm{x}_0 \mid \bm{x}_1, \bm{A} \bm{x}_{0} = \bm{k})$. The results are presented in Table~\ref{tab:ddpm}.

\begin{table}[t]
    \centering
    \captionsetup{skip=6pt}
    \caption{Comparison of constrained and unconstrained models across datasets. We report FID (Fréchet Inception Distance), IS (Inception Score), and Violation metrics.}
    \begin{adjustbox}{max width=\columnwidth}
    \begin{tabular}{l l r r c}
    \toprule
    \textbf{\textsc{Dataset}} & \textbf{\textsc{Model}} & \textbf{\textsc{FID $\downarrow$}} & \textbf{\textsc{IS $\uparrow$}} & \textbf{\textsc{Violation $\downarrow$}} \\
    \midrule
    \multirow{2}{*}{\textbf{CIFAR}}    & Ours & \textbf{3.811} & 9.223 $\pm$ 0.130 & \textbf{0} \\
                                       & DDPM & 4.173 & \textbf{9.278} $\pm$ \textbf{0.116} & 0.999 \\
    \midrule
    \multirow{2}{*}{\textbf{CelebA}}   & Ours & \textbf{10.193} & \textbf{2.360} $\pm$ \textbf{0.016} & \textbf{0} \\
                                       & DDPM & 10.345 & 2.358 $\pm$ 0.030 & 0.999 \\
    \midrule
    \multirow{2}{*}{\textbf{LSUN Church}} & Ours & \textbf{4.779} & \textbf{2.471} $\pm$ \textbf{0.020} & \textbf{0} \\
                                          & DDPM & 4.945 & 2.460 $\pm$ 0.028 & 1.0 \\
    \midrule
    \multirow{2}{*}{\textbf{LSUN Cat}} & Ours & \textbf{12.489} & \textbf{4.711} $\pm$ \textbf{0.054} & \textbf{0} \\
                                       & DDPM & 12.913 & 4.705 $\pm$ 0.047 & 1.0 \\
    \bottomrule
    \end{tabular}
    \end{adjustbox}
    \label{tab:ddpm}
\end{table}

\textbf{DDIM.}
Directly integrating our exact sampling methods into the DDIM \citep{song2021denoising} last backward diffusion step ensures constraint satisfaction; however, it does not enhance generative performance. Inspired by recent success in incorporating guidance at intermediate backward diffusion steps \citep{yuan2023physdiff, liu2024image}, we also incorporate our method in selected backward diffusion steps under DDIM sampling mechanism. Using the CIFAR 10 dataset, we explore what would be the optimal schedule policy and the optimal number of constrained sampling steps. We refer the readers to Appendix~\ref{appendix:Additional_Experimental_Detail_for_Diffusion_Model_Constrained_Data_Generation} for additional details. We use this scheduling policy on all constrained models and compare with unconstrained counterparts in Table~\ref{tab:ddim}.

\textbf{Results.}
Our experimental results demonstrate that standard diffusion models rarely satisfy the imposed constraints, despite being trained on datasets where the data distribution is governed by these constraints. In contrast, diffusion models incorporating our method have guaranteed constraint satisfaction. Moreover, they exhibit superior generative performance, as evidenced by improved FID and IS metrics.

\begin{table}[t]
    \centering
    \captionsetup{skip=6pt}
    \caption{Comparison of constrained and unconstrained models across datasets using DDIM sampling.
    }
    \begin{adjustbox}{max width=\columnwidth}
    \begin{tabular}{l l r r c}
    \toprule
    \textbf{\textsc{Dataset}} & \textbf{\textsc{Model}} & \textbf{\textsc{FID $\downarrow$}} & \textbf{\textsc{IS $\uparrow$}} & \textbf{\textsc{Violation $\downarrow$}} \\
    \midrule
    \multirow{2}{*}{\textbf{CIFAR}}    & Ours & \textbf{7.972} & 8.585 $\pm$ 0.118 & \textbf{0} \\
                                       & DDIM & 8.123 & \textbf{8.646} $\pm$ \textbf{0.084} & 1.0 \\
    \midrule
    \multirow{2}{*}{\textbf{CelebA}}   & Ours & \textbf{12.389} & \textbf{2.367} $\pm$ \textbf{0.023} & \textbf{0} \\
                                       & DDIM & 12.417 & 2.366 $\pm$ 0.033 & 0.999 \\
    \midrule
    \multirow{2}{*}{\textbf{LSUN Church}} & Ours & \textbf{6.557} & \textbf{2.596} $\pm$ \textbf{0.033} & \textbf{0} \\
                                          & DDIM & 6.702 & 2.584 $\pm$ 0.027 & 0.9999 \\
    \midrule
    \multirow{2}{*}{\textbf{LSUN Cat}} & Ours & \textbf{18.902} & \textbf{4.857} $\pm$ \textbf{0.037} & \textbf{0} \\
                                       & DDIM & 18.958 & 4.849 $\pm$ 0.062 & 0.9999 \\
    \bottomrule
    \end{tabular}
    \end{adjustbox}
    \label{tab:ddim}
\end{table}

\subsection{Charge-Neutral Predictions}
\label{sec:Partial Charge Predictions for Metal-Organic Frameworks}

\begin{table}[t]
    \centering
    \captionsetup{skip=6pt}
    \caption{Performances of different methods for estimating partial charges on metal ions are presented. Compared to the baseline MPNN (variance), both the closed-form loss function and likelihood objective yield superior mean absolute deviation (MAD) results. 
    The same holds for their ensemble counterpart. 
    % Moreover, 
    We find that
    ensemble methods (second block) notably boost the predictive performance in general.
    }
    \begin{adjustbox}{max width=\columnwidth}
    \begin{tabular}{l r @{\hskip 1pt}c@{\hskip 1pt} l r @{\hskip 1pt}c@{\hskip 1pt} l}
    \toprule
    \textbf{\textsc{Method}} & \multicolumn{3}{c}{\textbf{\textsc{MAD $\downarrow$}}} & \multicolumn{3}{c}{\textbf{\textsc{NLL $\downarrow$}}} \\
    % charge neutrality enforcement 
    neutrality enforcement & mean & \makebox[9pt][c]{$\!\!\pm\!\!$} & std & mean & \makebox[9pt][c]{$\!\!\pm\!\!$} & std \\
    \midrule
    Constrained Layer & 0.327 & \makebox[9pt][c]{$\!\!\pm\!\!$} & 0.004 & 103.522 & \makebox[9pt][c]{$\!\!\pm\!\!$} & 3.018 \\
    Constant Prediction & 0.324 & \makebox[9pt][c]{$\!\!\pm\!\!$} & 0.007 & \multicolumn{3}{c}{\multirow{1}{*}{---}} \\
    Element-mean (uniform) & 0.154 & \makebox[9pt][c]{$\!\!\pm\!\!$} & 0.002 & \multicolumn{3}{c}{\multirow{1}{*}{---}} \\
    Element-mean (variance) & 0.153 & \makebox[9pt][c]{$\!\!\pm\!\!$} & 0.002 & \multicolumn{3}{c}{\multirow{1}{*}{---}} \\
    MPNN (KKThPINN) & 0.0260 & \makebox[9pt][c]{$\!\!\pm\!\!$} & 0.0008 & 109.8 & \makebox[9pt][c]{$\!\!\pm\!\!$} & 6.9 \\
    MPNN (variance) & 0.0251 & \makebox[9pt][c]{$\!\!\pm\!\!$} & 0.0010 & -19.9 & \makebox[9pt][c]{$\!\!\pm\!\!$} & 71.1 \\
    Closed-form (ours) & \textbf{0.0245} & \makebox[9pt][c]{$\boldsymbol{\!\!\pm\!\!}$} & \textbf{0.0009} & & \makebox[9pt][c]{$>$} & 1{e+}7 \\
    Likelihood (ours) & 0.0248 & \makebox[9pt][c]{$\!\!\pm\!\!$} & 0.0008 & \textbf{-252} & \makebox[9pt][c]{$\boldsymbol{\!\!\pm\!\!}$} & \textbf{24.7} \\
    \midrule
    Constrained Layer (ens) & 0.319 & \makebox[9pt][c]{$\!\!\pm\!\!$} & 0.002 & 99.236 & \makebox[9pt][c]{$\!\!\pm\!\!$} & 2.3 \\
    MPNN (ens, KKThPINN) & 0.0244 & \makebox[9pt][c]{$\!\!\pm\!\!$} & 0.0006 & 57.29 & \makebox[9pt][c]{$\!\!\pm\!\!$} & 12.8 \\
    MPNN (ens, variance) & 0.0238 & \makebox[9pt][c]{$\!\!\pm\!\!$} & 0.0007 & -45.2 & \makebox[9pt][c]{$\!\!\pm\!\!$} & 55.8 \\
    Closed-form (ens, ours) & \textbf{0.0230} & \makebox[9pt][c]{$\boldsymbol{\!\!\pm\!\!}$} & \textbf{0.0008} & & \makebox[9pt][c]{$>$} & 1{e+}7 \\
    Likelihood (ens, ours) & \textbf{0.0231} & \makebox[9pt][c]{$\boldsymbol{\!\!\pm\!\!}$} & \textbf{0.0007} & \textbf{-180} & \makebox[9pt][c]{$\boldsymbol{\!\!\pm\!\!}$} & \textbf{38.3} \\
    \bottomrule
    \end{tabular}
    \end{adjustbox}
    \label{tab:Performance}
\end{table}

Metal-organic frameworks (MOFs) represent a class of materials with a wide range of applications in chemistry and materials science. Predicting properties of MOFs, such as partial charges on metal ions, is essential for understanding their reactivity and performance in chemical processes. However, it is challenging due to the complex interactions between metal ions and ligands and 
the requirement that the predictions need to satisfy the charge neutral constraint, that is, an exactly-zero constraint.

We adopt the same setting as \citet{raza2020message} where the model architecture uses the Message Passing Neural Network (MPNN) framework and incorporates equality constraint for charges, ensuring strict adherence to the critical constraint. The crystal structure of each MOF is modeled as an undirected graph, \( G = (\mathcal{V}, \mathcal{E}, \mathbf{X}) \), where \( \mathcal{V} \) represents the set of \( n = |\mathcal{V}| \) nodes corresponding to atoms, \( \mathcal{E} \) denotes the set of edges representing bonds, and \( \mathbf{X} \in \mathbb{R}^{d \times n} \) is the matrix of node features. The adjacency matrix \( \mathbf{D} \in \mathbb{R}^{n \times n} \) encodes edges with \( D_{uv} = 1 \) if nodes \( u \) and \( v \) are connected, else \( D_{uv} = 0 \). Our aim is to develop a function \( r \) that, given the graph \( G \), predicts the charge distribution across the nodes, which follows a Gaussian distribution: $(\mathbf{X}, \mathbf{D}) \mapsto r(\mathbf{X}, \mathbf{D}) = \mathbf{q}$. This function must adhere to the charge neutrality condition $\sum_{v=1}^{n} q_v = 0$, where \( q_v \) represents the charge associated with node \( v \) of the charge vector \( \mathbf{q} \in \mathbb{R}^n \). The model is trained using the element L1 loss.

The core innovation involves replacing the conventional L1 loss with the closed-form Gaussian loss as well as the negative log likelihood of the constrained multivariate Gaussian. The closed-form Gaussian loss penalizes deviations from the equality constraint while considering the probabilistic nature of Gaussian variables, and the negative log likelihood loss models the observed data with higher probability. Additionally, we also devise an ensemble methodology to enhance the predictive performance and robustness of our linear-equality constrained MPNN model. We apply the averaging aggregation technique to combine the predictions from two instances trained with variations in initialization. 

The prediction performance of our four proposed approaches is presented in Table~\ref{tab:Performance}. Results show that training using negative log likelihood loss and closed-form expected loss achieves better performance than MPNN (variance) which is considered to be the strongest baseline approach. When further combined with the ensemble method, our approach achieves significantly better predictions. We also combine \textit{Constrained Layer} and \textit{KKThPINN} with L1 loss functions. While \textit{Constrained Layer} exactly enforces the constraints, it significantly impairs predictive performance.

\begin{table}[t]
    \centering
    \captionsetup{skip=6pt}
    \caption{Comparison of models across \textbf{CSTR}, \textbf{plant}, and \textbf{distillation} tasks. The mean and standard deviation of MSE scaled by $10^{-4}$ are reported. All experiments are averaged for 10 times.}
    \begin{adjustbox}{max width=\columnwidth}
    \begin{tabular}{l r @{\hskip 1pt}c@{\hskip 1pt} l r @{\hskip 1pt}c@{\hskip 1pt} l r @{\hskip 1pt}c@{\hskip 1pt} l}
    \toprule
    \textbf{\textsc{Model}} & \multicolumn{3}{c}{\textbf{\textsc{CSTR}}} & \multicolumn{3}{c}{\textbf{\textsc{plant}}} & \multicolumn{3}{c}{\textbf{\textsc{distillation}}} \\
      % & mean & \makebox[9pt][c]{$\!\!\pm\!\!$} & std & mean & \makebox[9pt][c]{$\!\!\pm\!\!$} & std & mean & \makebox[9pt][c]{$\!\!\pm\!\!$} & std \\
    \midrule
    ECNN & 20.6 & \makebox[9pt][c]{$\!\!\pm\!\!$} & 27.0 & 0.31 & \makebox[9pt][c]{$\!\!\pm\!\!$} & 0.23 & 1.94 & \makebox[9pt][c]{$\!\!\pm\!\!$} & 0.70 \\
    KKThPINN & 11.7 & \makebox[9pt][c]{$\!\!\pm\!\!$} & 20.3 &  0.11 & \makebox[9pt][c]{$\!\!\pm\!\!$} & 0.04 & 2.02 & \makebox[9pt][c]{$\!\!\pm\!\!$} & 0.94 \\
    NN & 18.3 & \makebox[9pt][c]{$\!\!\pm\!\!$} & 20.8 & 0.34 & \makebox[9pt][c]{$\!\!\pm\!\!$} & 0.64 & 1.99 & \makebox[9pt][c]{$\!\!\pm\!\!$} & 0.67 \\
    PINN & 260.8 & \makebox[9pt][c]{$\!\!\pm\!\!$} & 20.4 & 3.62 & \makebox[9pt][c]{$\!\!\pm\!\!$} & 1.94 & 40.9 & \makebox[9pt][c]{$\!\!\pm\!\!$} & 10.7 \\
    CL & 9.28 & \makebox[9pt][c]{$\!\!\pm\!\!$} & 3.56 & 0.58 & \makebox[9pt][c]{$\!\!\pm\!\!$} & 0.64 & 2.26 & \makebox[9pt][c]{$\!\!\pm\!\!$} & 1.19 \\
    Ours & \textbf{4.31} & \makebox[9pt][c]{$\!\!\pm\!\!$} & \textbf{1.58} & \textbf{0.09} & \makebox[9pt][c]{$\!\!\pm\!\!$} & \textbf{0.05} & \textbf{1.73} & \makebox[9pt][c]{$\!\!\pm\!\!$} & \textbf{0.70} \\
    \bottomrule
    \end{tabular}
    \end{adjustbox}
    \label{tab:chemical_engineering_result}
\end{table}

\subsection{Chemical Process Units and Subsystems}
Linear equality constraints are essential in chemical engineering, governing processes through principles like mass balance and stoichiometry \citep{chen2024hard}. High-fidelity simulations of the chemical systems could be computationally expensive due to the large number of differential and algebraic equations needed to solve. Thus, machine learning surrogate modeling has been a promising solution to provide physically accurate representations of these systems. We follow the experiment setting from \citet{chen2024hard} and conduct experiments on aspen models of a continuous stirred-tank reactor (\textbf{CSTR}) unit, an extractive distillation subsystem (\textbf{distillation}), and a chemical plant (\textbf{plant}).

While \citet{chen2024hard} imposes no distributional assumptions on the output space, our approach enforces a Gaussian distribution assumption, allowing the model to predict constrained mean and variance. Leveraging our theoretical framework, we sample exactly from the constrained distribution and train the model using closed-form expected loss functions. We compare the performance of our approach to several baselines ECNN \citep{chen2024hard}, \textit{KKThPINN} \citep{chen2024hard}, standard Feed Forward Neural Networks (NN), Physics Informed Neural Networks \citep{raissi2019physics}, and \textit{Constrained Layer} \citep{stoian2024realistic}. We refer the readers to the Appendix~\ref{appendix:chemical_process_units_and_subsystems} and \citet{chen2024hard} for detailed information regarding baseline implementations and experiment settings. As shown in Table~\ref{tab:chemical_engineering_result}, our model consistently outperforms the baselines by a significant margin. 
Furthermore, as detailed in Appendix~\ref{appendix:chemical_process_units_and_subsystems}, our method not only improves predictive accuracy but also achieves significantly faster convergence.

\subsection{Stock Investment}
We study a popular topic in financial engineering which leverages quantitative modeling, stochastic optimization, and predictive analytics to make data-driven decisions under uncertainty. Stock investment allocation \citep{zhang2020deep, butler2021integrating} involves determining the optimal allocation of investments across stocks based on predictions of future market trends. The objective is to create an allocation plan that maximizes returns, which requires that the sum of weights assigned to all stocks must equal 1. We conduct our experiments using historical data from the S\&P 500 index over a calendar year, the goal is to construct a portfolio that maximizes the Sharpe ratio \citep{sharpe1966mutual} for the next 120 trading days. We utilize a state-of-the-art time series prediction model from \citet{cao2020spectral}, which enforces the sum-to-one constraint using a default softmax activation. We assume the weights to follow Gaussian distributions by allowing shorting stocks. The models optimize stock investment allocation plans with their performance evaluated using the Sharpe ratio. $\text{Sharpe Ratio} = \frac{R_p - R_f}{\sigma_p}$, where $R_p$ denotes the expected return of the investment, $R_f$ is the risk-free rate of return, and $\sigma_p$ is the standard deviation of the investment's excess return. The results demonstrate that our model consistently outperforms the alternatives.

\begin{table}[t]
    \centering
    \captionsetup{skip=6pt}
    \caption{Comparison of models based on Sharpe ratio. We report the mean and standard deviation averaged across 10 runs.}
    \begin{adjustbox}{max width=\columnwidth}
    \begin{tabular}{l r @{\hskip 1pt}c@{\hskip 1pt} l}
    \toprule
    \textbf{\textsc{Model}} & \multicolumn{3}{c}{\textbf{\textsc{Sharpe ratio $\uparrow$}}} \\
      % & mean & \makebox[9pt][c]{$\!\!\pm\!\!$} & std \\
    \midrule
    StemGNN & 1.5576 & \makebox[9pt][c]{$\!\!\pm\!\!$} & 0.3405 \\
    StemGNN-KKThPINN & 1.8092 & \makebox[9pt][c]{$\!\!\pm\!\!$} & 0.7055 \\
    StemGNN-CL & 1.5018 & \makebox[9pt][c]{$\!\!\pm\!\!$} & 0.3318 \\
    Ours & \textbf{1.9041} & \makebox[9pt][c]{$\!\!\pm\!\!$} & \textbf{0.2329} \\
    \bottomrule
    \end{tabular}
    \end{adjustbox}
    \label{tab:sharpe_ratio_comparison}
\end{table}

\section{Conclusion}
We introduced a principled framework for incorporating hard linear equality constraints into DGMs, addressing a fundamental challenge in generative modeling--ensuring constraint satisfaction while maintaining high data fidelity. 
Unlike existing methods that adjust individual samples post hoc, our approach directly constrains the distribution and enables end-to-end training of the constrained models through novel gradient estimators,
enabling flexible integration of constraints into various generative architectures. 
We further perform extensive empirical evaluations across diverse datasets and scientific applications. 
Our method outperforms baseline methods across multiple model classes, including VAEs, diffusion models, and graph neural networks, showcasing its flexibility and effectiveness.

\clearpage
\section*{Acknowledgements}
This work was funded in part by the DARPA ANSR program under award FA8750-23-2-0004, the DARPA CODORD program under award HR00112590089, NSF grant \#IIS-1943641, and gifts from Adobe Research, Cisco Research, and Amazon.

\section*{Impact Statement}
This paper presents work whose goal is to advance the field of Machine Learning. There are many potential societal consequences of our work, none of which we feel must be specifically highlighted here.

\bibliographystyle{icml2025}
\bibliography{references}
\clearpage

\appendix

\section{Additional Experiment Details in Synthetic Settings}
\label{appendix:synthetic}
We carried out a series of experiments to analyze the effectiveness of our gradient estimator from Gaussian variable. Our focus lies on three pivotal metrics: bias, variance, and the average error. Since, we only care about the direction of the gradients, we employed the cosine distance, namely 1 -- cosine similarity, to measure the deviation of our gradient estimators from the ground truth vector. The ground truth are sampled from $\mathcal{N} (\bm{0}, \bm{I} )$ satisfying the constraint. We randomly generated $20$ sets of parameters and calculated the metrics for each set. Then, we take average of these $20$ repeats and computed their standard deviations. The randomly generated gradients are sampled from $\mathcal{N} (\bm{0}, \bm{I} )$.

\begin{itemize}
    \item \textbf{bias}: $1 - cos \left( \frac{\sum_j^n h_j}{n}, h_{gt}\right)$
    \item \textbf{variance}: $var\left( \left\{ 1 - cos \left( h_i, \frac{\sum_j^n h_j}{n} \right) \right\}_{i=1}^n \right)$
    \item \textbf{averaged error}: $\frac{\sum_{i=1}^n 1 - cos \left( h_i, h_{gt} \right)}{n}$ 
\end{itemize}
where $h_i$ denotes the approximated gradient and $h_{gt}$ denotes the ground truth gradient.

\section{Additional Experimental Details for VAE Constrained Latent Space}
\label{appendix:vae_constrained_latent_space}
\paragraph{Model} We present the model architecture used in the experiment. \newline
\textbf{Encoders:}
\[
\begin{aligned}
&\text{fc}(\text{input\_size}, 512) \rightarrow  \text{ReLU} \rightarrow \text{fc}(512, 256) \rightarrow  \text{ReLU} \\
&\rightarrow \text{fc}(512, z_{dim}) 
\end{aligned}
\]
\textbf{Sampling:}
We use two separate $\text{fc}(z_{dim}, z_{dim})$ for predicting the mean and log variance of the latent distribution. We sample exactly from the constrained distribution. \newline
\textbf{Decoders:}
\[
\begin{aligned}
&\text{fc}(z_{dim}, 256) \rightarrow  \text{ReLU} \rightarrow \text{fc}(256, 512) \rightarrow  \text{ReLU} \\
&\rightarrow \text{fc}(512, \text{input\_size}) \rightarrow \texttt{output\_function}
\end{aligned}
\]
\texttt{output\_function} is \texttt{sigmoid()} predicting the mean of Bernoulli Observations.

\paragraph{Training}
All models were implemented with PyTorch and trained using the Adam optimizer with a mini-batch size of 128 and learning rate 0.0001. All models are trained with $100$ epochs.

\section{Additional Experimental Details for VAE Constrained Data Generation}
\label{appendix:vae_constrained_data_generation}

\begin{figure}[t]
    \centering
    \includegraphics[width=\columnwidth]{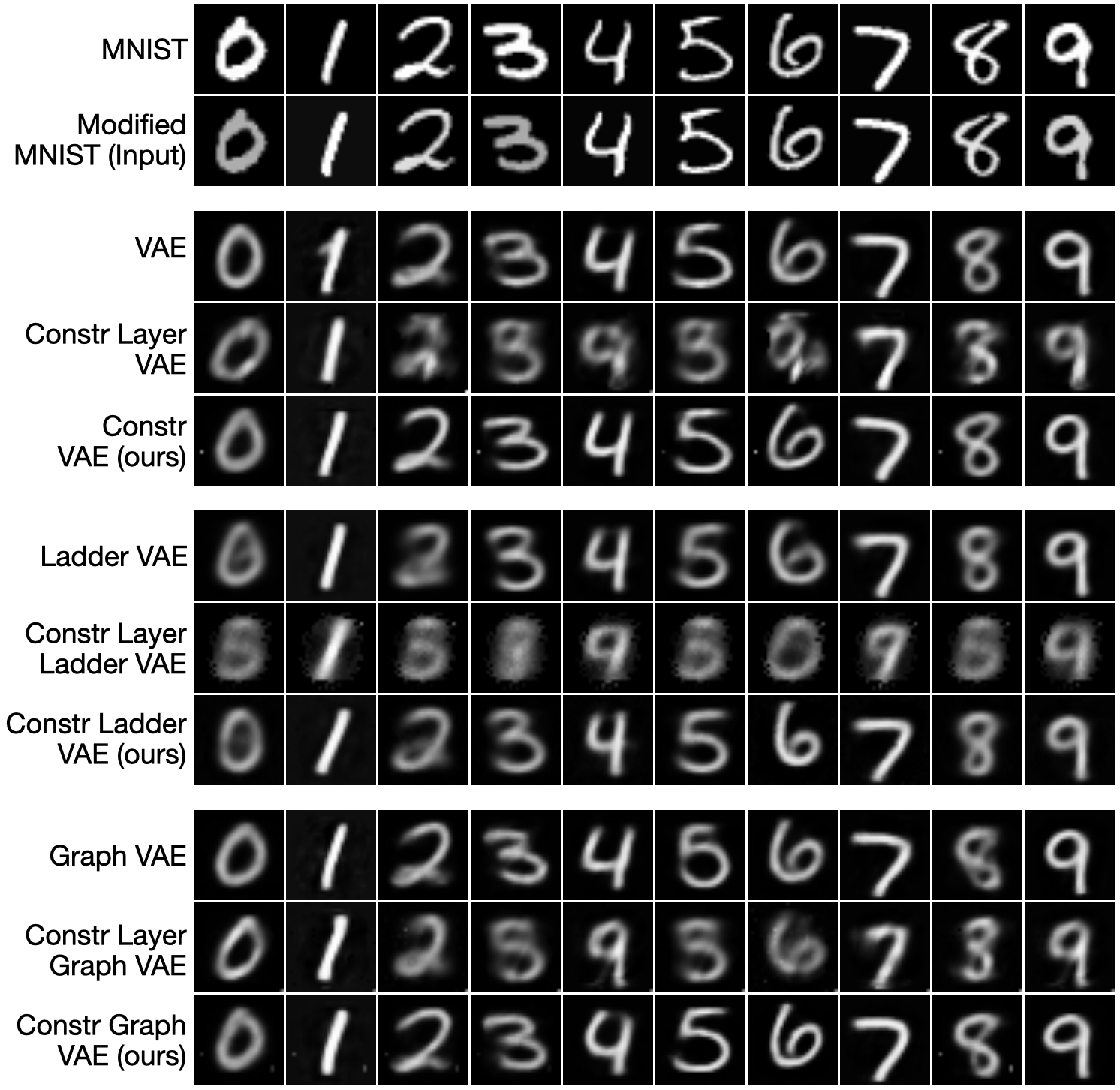}
    \caption{
    The first block displays the original MNIST images and the ones modified by the brightness constraint as inputs.
    For the following blocks, each displays the reconstructed images by different VAE architectures.
    Within each block,
    the first row is generated by the unconstrained VAE,
    the second by VAE constrained by the baseline Constrained Layer
    and the last one by VAE constrained by our method.
    % The first row displays the original MNIST images, while the second row shows the constrained versions that are input to the VAEs. We compare the reconstructed images produced by various VAE models and their contained counterparts integrated with linear equality.
    }
    \label{fig:mnist example}
\end{figure}

\begin{table}[t]
    \centering
    \captionsetup{skip=6pt}
    \caption{Comparison on VAE Constraint Violation and Training Time. The tabel reports the average training time for one epoch. We observe that our approach to enforce the constraints does not cause significant increase in training time. For Vanilla VAE and Ladder VAE, the increase in training time is less than 1 seconds. }
    \begin{tabular}{c c}
    \toprule
    \textbf{\textsc{Algorithm}}  & \textbf{\textsc{Training Time}} $\downarrow$ \\
    \midrule
    VAE  & $\mathbf{3.94} \boldsymbol{\pm} \mathbf{0.14}$  \\
    Constrained VAE  & $4.21 \pm 0.11$ \\
    Ladder VAE  & $\mathbf{10.11} \boldsymbol{\pm} \mathbf{0.37}$ \\
    Constrained Ladder VAE  & $10.90 \pm 0.47$ \\
    Graph VAE  & $\mathbf{44.11} \boldsymbol{\pm} \mathbf{0.50}$ \\
    Constrained Graph VAE  & $46.61 \pm 0.59$ \\
    \bottomrule
    \end{tabular}
    
    \label{tab:VAETraining}

\end{table}

\paragraph{Model} We adopted a model architecture similar to ~\citep{he2018variational}. \newline
\textbf{Encoders:}
\[
\begin{aligned}
&\text{fc}(\text{input\_size}, 512) \rightarrow \text{batch\_norm} \rightarrow \text{ELU} \rightarrow \text{fc}(512, 512) \\
&\rightarrow \text{batch\_norm} \rightarrow \text{ELU} \rightarrow \text{fc}(512, 256) \rightarrow \text{batch\_norm} \rightarrow \\ 
&\text{ELU} \rightarrow \text{fc}(256, 128)
\end{aligned}
\]
\textbf{Decoders:}
\[
\begin{aligned}
&\text{fc}(N', 256) \rightarrow \text{batch\_norm} \rightarrow \text{ELU} \rightarrow \text{fc}(256, 512) \rightarrow \\
&\text{batch\_norm} \rightarrow \text{ELU} \rightarrow \text{fc} (512, 512) \rightarrow \text{batch\_norm} \rightarrow \\
&\text{ELU} \rightarrow \text{fc}(512, \text{input\_size}) \rightarrow \text{output\_function()}
\end{aligned}
\]
\texttt{output\_function} is \texttt{sigmoid()} predicting $\mu$, \texttt{fc}(\texttt{input\_size}, \texttt{input\_size}) predicting log var of Gaussian observations.

\paragraph{Training}
All models were implemented with PyTorch and trained using the Adam optimizer with a mini-batch size of 128. We conducted hyperparameter tuning on learning rate and adopt the best learning rate in the final model. All models are trained with $1000$ epochs.

\paragraph{Dataset}
We modify the original MNIST dataset by introducing linear equality constraint on every image. We constraint the sum of all pixel values in every image to be $100$, which is approximately the mean and median of all images in MNIST. We first scale the pixel values so the sum is $100$. To enforce the pixel values in the range $[0, 1]$, we uniformly distribute the extra pixel values to white pixels.

\paragraph{Training Time}
In order to show that \textit{Marginal Expectation} adds minimal training time, we measure the training time of 1 epoch and report the average training time.  We record the average training time of all the models for one epoch. All results are averaged over 5 independent runs. The results are summarized in Table~\ref{tab:VAEGenerative}. The results show that the constrained versions are less than $10\%$ slower than the unconstrained version. \textit{Marginal Expectation} adds less than 1 second of additional training time for VAE and Ladder VAE.

\section{Additional Experimental Detail for Diffusion Model Constrained Data Generation}
\label{appendix:Additional_Experimental_Detail_for_Diffusion_Model_Constrained_Data_Generation}

\subsection{Algorithm}
We present the algorithm for incorporating our exact sampling method in selected intermediate backward diffusion steps in Algorithm~\ref{algo:ddim_sample}. 
% Additionally, we also present an illustrative workflow in Figure~\ref{fig:ddim_correction}.

% \begin{figure*}[t!]
%     \centering
%     \includegraphics[width=0.65\textwidth]{figs/correction.pdf}
%     \caption{
%     Workflow for incorporating exact sampling in DDIM's selected backward diffusion process.
%     }\label{fig:ddim_correction}
% \end{figure*}

\begin{algorithm}
\caption{DDIM with Exact Sampling during Inference.}
\label{algo:ddim_sample}
\textbf{Require:} $\tau = \{\tau_0, \tau_1, ..., \tau_K\}$ (diffusion timestamp sequence, where $\tau_0 = 0, \cdots \tau_K = T$.
\begin{algorithmic}[1]
    \STATE $\bm{x}_{\tau_{K}} \sim \mathcal{N}(0, I)$
    \FOR{$i = K, K - 1, \dots, 1, 0$}
        \IF{Exact Sampling}
            \STATE $\bm{x}_0^{\tau_{i-1}} \sim \mathcal{N}(\bm{x}_0^{\tau_{i-1}}; \boldsymbol{\mu}_{\theta}(\bm{x}_{\tau_{i}}, t), \bm{\Sigma}_{\theta}, \bm{A} \boldsymbol{\mu}_{\theta} = \bm{k})$
        \ELSE
            \STATE $\bm{x}_0^{\tau_{i-1}} \sim \mathcal{N}(\bm{x}_0^{\tau_{i-1}}; \boldsymbol{\mu}_{\theta}(\bm{x}_{\tau_{i}}, t), \bm{\Sigma}_{\theta})$
        \ENDIF
        \STATE $\bm{x}_{\tau_{i-1}} \sim p(\bm{x}_{\tau_{i-1}} \mid \bm{x}_0^{\tau_{i-1}}, \bm{x}_{\tau_{i}})$
    \ENDFOR
    \STATE Return $\bm{x}_{\tau_0}$
\end{algorithmic}
\end{algorithm}

\subsection{Model Architecture}
The denoising model used in DDPM is a U-Net \citep{ho2020denoising} architecture. The details of the model used for CIFAR 10 are as follows:
\begin{itemize}
    \item \textbf{Channels ($ch$)}: 128
    \item \textbf{Channel Multipliers ($ch\_mult$)}: \{1, 2, 2, 2\}
    \item \textbf{Dropout}: 0.1
    \item \textbf{Number of Residual Blocks ($num\_res\_blocks$)}: 2
    \item \textbf{Number of Attention Blocks ($attn$)}: 2
\end{itemize}

The details of the model used for CELEBA, LSUN Church, and LSUN Cat are as follows:
\begin{itemize}
    \item \textbf{Channels ($ch$)}: 128
    \item \textbf{Channel Multipliers ($ch\_mult$)}: \{1, 1, 2, 2, 4\}
    \item \textbf{Dropout}: 0.1
    \item \textbf{Number of Residual Blocks ($num\_res\_blocks$)}: 2
    \item \textbf{Number of Attention Blocks ($attn$)}: 2
\end{itemize}

\subsection{Training and Evaluation}
We trained the DDPM using a distributed setup on NVIDIA V100 GPUs with 32GB of memory. For the CIFAR-10 dataset, we used 4 GPUs, while training on CELEBA-HQ, LSUN Church, and LSUN Cat utilized 6 GPUs. Training was conducted with exponential moving average (EMA) applied to the model parameters, using a decay factor of 0.9999. The number of diffusion steps was fixed at $T = 1000$ without a hyperparameter sweep, employing a linear schedule for the noise variance from $\beta_1 = 10^{-4}$ to $\beta_T = 0.02$. CIFAR-10 was trained for 800,000 steps, CelebA-HQ for 500,000 steps, LSUN Cat for 1.8 million steps, and LSUN Church for 1.2 million steps. Images from CelebA-HQ, LSUN Cat, and LSUN Church datasets were uniformly downsampled to $128 \times 128$ resolution.

For evaluation, Inception and Fréchet Inception Distance (FID) scores were calculated on 50,000 samples. 

\subsection{Data}
We modify the original datasets by introducing linear equality constraint on every channel for each image. We constraint the sum of all pixel values in every image to be the mean or median of all images in the dataset.

\subsection{Additional Experiment Results}
We first explore what would be the optimal schedule policy by comparing against: 1.) \textit{Uniform N}: Distributing N constrained exact sampling evenly across diffusion steps; 2.) \textit{Start M, End N} Placing M consecutive constrained exact sampling at the start and N at the end of the diffusion process; 3.) \textit{Start N, Space S}: Placing N constrained exact sampling at the start with a spacing of S; 4.) \textit{End N, Space S}: Placing N constrained exact sampling at the end with a spacing of S. The results are shown in Table~\ref{tab:schedule_comparison}, which suggest that placement at the end exactly enforces constraint satisfaction, while constraint layers placed at the start produce higher IS. For optimal balance, we adopt the \textit{Start N End N} schedule, which places $N$ correction steps at both the beginning and the end of the diffusion process. Next, we test the optimal number of $N$ under this schedule in Table~\ref{tab:num_constraint_layer}. 

Since $N=2$ achieves the lowest FID and lowest IS and $N=4$ achieves highest FID and highest IS, we adopt $N=3$ to balance these two metrics.

\begin{table}[t]
    \centering
    \captionsetup{skip=6pt}
    \caption{Comparison of schedules based on FID, IS, and Violation metrics. The experiments are conducted on CIFAR-10 dataset.}
    \begin{adjustbox}{max width=\columnwidth}
    \begin{tabular}{l r @{\hskip 1pt}c@{\hskip 1pt} l r @{\hskip 1pt}c@{\hskip 1pt} l r}
    \toprule
    \textbf{Schedule} & \multicolumn{3}{c}{\textbf{FID $\downarrow$}} & \multicolumn{3}{c}{\textbf{IS $\uparrow$}} & \textbf{Violation $\downarrow$} \\
    \midrule
    Uniform 4 & 7.979 & \makebox[9pt][c]{} &  & 8.589 & \makebox[9pt][c]{$\!\!\pm\!\!$} & 0.121 & 0 \\
    Start 3 End 1 & 8.049 & \makebox[9pt][c]{} &  & 8.612 & \makebox[9pt][c]{$\!\!\pm\!\!$} & 0.136 & 0 \\
    Start 2 End 2 & 7.823 & \makebox[9pt][c]{} &  & 8.570 & \makebox[9pt][c]{$\!\!\pm\!\!$} & 0.069 & 0 \\
    Start 1 End 3 & 7.912 & \makebox[9pt][c]{} &  & 8.616 & \makebox[9pt][c]{$\!\!\pm\!\!$} & 0.088 & 0 \\
    Start 4 Space 1 & 7.999 & \makebox[9pt][c]{} &  & 8.670 & \makebox[9pt][c]{$\!\!\pm\!\!$} & 0.108 & 0.9999 \\
    Start 4 Space 2 & 8.040 & \makebox[9pt][c]{} &  & 8.580 & \makebox[9pt][c]{$\!\!\pm\!\!$} & 0.102 & 0.9999 \\
    Start 4 Space 3 & 8.092 & \makebox[9pt][c]{} &  & 8.564 & \makebox[9pt][c]{$\!\!\pm\!\!$} & 0.080 & 0.9999 \\
    End 4 Space 1 & 8.061 & \makebox[9pt][c]{} &  & 8.580 & \makebox[9pt][c]{$\!\!\pm\!\!$} & 0.098 & 0 \\
    End 4 Space 2 & 7.924 & \makebox[9pt][c]{} &  & 8.603 & \makebox[9pt][c]{$\!\!\pm\!\!$} & 0.096 & 0 \\
    End 4 Space 3 & 8.016 & \makebox[9pt][c]{} &  & 8.594 & \makebox[9pt][c]{$\!\!\pm\!\!$} & 0.133 & 0 \\
    \bottomrule
    \end{tabular}
    \end{adjustbox}
    \label{tab:schedule_comparison}
\end{table}

\begin{table}[t]
    \centering
    \captionsetup{skip=6pt}
    \caption{Performance metrics across different $N$ under the Start N End N schedule. The experiments are conducted on CIFAR-10 dataset.}
    \begin{adjustbox}{max width=\columnwidth}
    \begin{tabular}{r r @{\hskip 1pt}c@{\hskip 1pt} l r @{\hskip 1pt}c@{\hskip 1pt} l r}
    \toprule
    \textbf{N} & \multicolumn{3}{c}{\textbf{FID $\downarrow$}} & \multicolumn{3}{c}{\textbf{IS $\uparrow$}} & \textbf{Violation $\downarrow$} \\
    \midrule
    1 & 7.977 & \makebox[9pt][c]{} &  & 8.608 & \makebox[9pt][c]{$\!\!\pm\!\!$} & 0.086 & 0 \\
    2 & 7.823 & \makebox[9pt][c]{} &  & 8.570 & \makebox[9pt][c]{$\!\!\pm\!\!$} & 0.069 & 0 \\
    3 & 7.972 & \makebox[9pt][c]{} &  & 8.585 & \makebox[9pt][c]{$\!\!\pm\!\!$} & 0.118 & 0 \\
    4 & 8.007 & \makebox[9pt][c]{} &  & 8.671 & \makebox[9pt][c]{$\!\!\pm\!\!$} & 0.087 & 0 \\
    5 & 8.004 & \makebox[9pt][c]{} &  & 8.591 & \makebox[9pt][c]{$\!\!\pm\!\!$} & 0.104 & 0 \\
    6 & 7.975 & \makebox[9pt][c]{} &  & 8.578 & \makebox[9pt][c]{$\!\!\pm\!\!$} & 0.081 & 0 \\
    \bottomrule
    \end{tabular}
    \end{adjustbox}
    \label{tab:num_constraint_layer}
\end{table}

\section{Additional Experimental Details for Partial Charge Predictions}
\paragraph{Training} Here, we describe our training and evaluation process for the exact-k constrained MPNN. We conducted a random partitioning of the dataset containing 2266 charge-labeled MOFs, creating distinct training, validation, and test sets (70/10/20\%). We use the training set for direct model parameter tuning, while the validation set determines stopping criteria. The test set plays a crucial role in providing an unbiased assessment of the final model's performance.

\paragraph{Hyperparameter Tuning} To optimize our model's performance, we conduct a systematic hyperparameter tuning process, sequentially optimizing six key hyperparameters: Learning rate, Batch size, Time steps, Embedding size, Hidden Feature size, and Patience Threshold. The optimal hyperparameter values are reported in supplementary materials.

The optimal hyperparameter values for closed-form expected loss are: lr = 0.005, batch size = 128, time steps = 6, embedding size = 20, hidden feature size = 50, and patience threshold = 300, and the optimal hyperparameter values for negative log likelihood loss are : lr = 0.005, batch size = 128, time steps = 5, embedding size = 30, hidden feature size = 50, and patience threshold = 300.

\section{Additional Experimental Details for Chemical Process Units and Subsystems}
\label{appendix:chemical_process_units_and_subsystems}
\textbf{Experiment setting} In a Continuous Stirred-Tank Reactor (\textbf{CSTR}) benzene and ethylene react to produce ethylbenzene following $B + E \rightarrow EB$. The stoichiometric relationship ensures reactant consumption matches product formation, governed by linear equality constraints. We consider the setting where the reactor operates with fixed volume and pressure, leaving the molar flow rates of benzene, ethylene, and the working temperature as design variables. A neural surrogate model is trained to predict output flow rates. In a chemical plant (\textbf{plant}) that produces DME and DEE uses methanol, ethanol, and water as feed, Mass balance ensures consistency between inflows, reactions, outflows, and recycling streams. Assuming the entire system is fixed, a surrogate model can predict output flow based on inputs and recycling. In an extractive distillation subsystem (\textbf{distillation}) which separates a 50/50 azeotropic mixture of n-heptane and toluene using phenol as a solvent, two distillation columns separate n-heptane at the top and toluene with phenol solvent recovered at the bottom. A surrogate model is developed to predict heat duties and the flow rates of components in the distillate streams for optimal operating conditions. With no chemical reactions, the sum of the molar flow rates of n-heptane, toluene, and phenol always equals the distillate rate.

We specify the input and output variables as well as governing constraints for the three experiments. $x_i$ denotes the input variables and $y_i$ denotes the output variables.
\paragraph{CSTR}
\begin{itemize}
    \item $x_1$: Temperature of the RX unit.
    \item $x_2$: Molar flow rate of Benzene in the B stream.
    \item $x_3$: Molar flow rate of Ethylene in the E stream.
    \item $y_1$: Molar flow rate of Ethylbenzene in the EB stream.
    \item $y_2$: Molar flow rate of Benzene in the EB stream.
    \item $y_3$: Molar flow rate of Ethylene in the EB stream.
\end{itemize}
The linear equality constraints governing the system are:
\[
 - y_2 + y_3 = - x_2 + x_3, \quad \text{Reactants consumption},
\]
\[
 - y_1 - y_2 = - x_2, \quad \text{EB production}.
\]

\paragraph{plant}
\begin{itemize}
    \item $x_1$: Mass flow rate of methanol in the FEED stream.
    \item $x_2$: Mass flow rate of ethanol in the FEED stream.
    \item $x_3$: Mass flow rate of water in the FEED stream.
    \item $x_4$: Total mass flow rate of the PURGE stream.
    \item $y_1$: Total mass flow rate of the DME stream.
    \item $y_2$: Mass flow rate of DME in the DME stream.
    \item $y_3$: Total mass flow rate of the DEE stream.
    \item $y_4$: Mass flow rate of DEE in the DEE stream.
    \item $y_5$: Total mass flow rate of the WATER stream.
\end{itemize}

The system satisfies the following mass balance equation:
\[
- y_1 - y_3 - y_5 = -x_1 - x_2 - x_3 + x_4, \quad \text{Mass balance}.
\]

\paragraph{distillation}
\begin{itemize}
    \item $x_1$: Molar flow rate of phenol in the SOLVENT stream.
    \item $x_2$: Reflux ratio of COLUMN column.
    \item $x_3$: Distillate rate of COLUMN column.
    \item $x_4$: Reflux ratio of COL-REC column.
    \item $x_5$: Distillate rate of COL-REC column.
    \item $y_1$: Molar flow rate of $n$-heptane in the C7 stream.
    \item $y_2$: Molar flow rate of toluene in the TOLUENE stream.
    \item $y_3$: Condenser heat duty of COLUMN column.
    \item $y_4$: Reboiler heat duty of COLUMN column.
    \item $y_5$: Condenser heat duty of COL-REC column.
    \item $y_6$: Reboiler heat duty of COL-REC column.
    \item $y_7$: Molar flow rate of toluene in the C7 stream.
    \item $y_8$: Molar flow rate of phenol in the C7 stream.
    \item $y_9$: Molar flow rate of $n$-heptane in the TOLUENE stream.
    \item $y_{10}$: Molar flow rate of phenol in the TOLUENE stream.
\end{itemize}

The system satisfies the following linear equality constraints:
\[
- y_1 - y_7 - y_8 = -x_3, \quad \text{C7 fractions},
\]
\[
- y_2 - y_9 - y_{10} = -x_5, \quad \text{TOLUENE fractions}.
\]

\paragraph{Training and Validation Loss Curve}
We present the training and validation loss curve, which demonstrates the faster convergence of our approach.

\begin{figure}[t]
    \centering
    \includegraphics[width=0.45\textwidth]{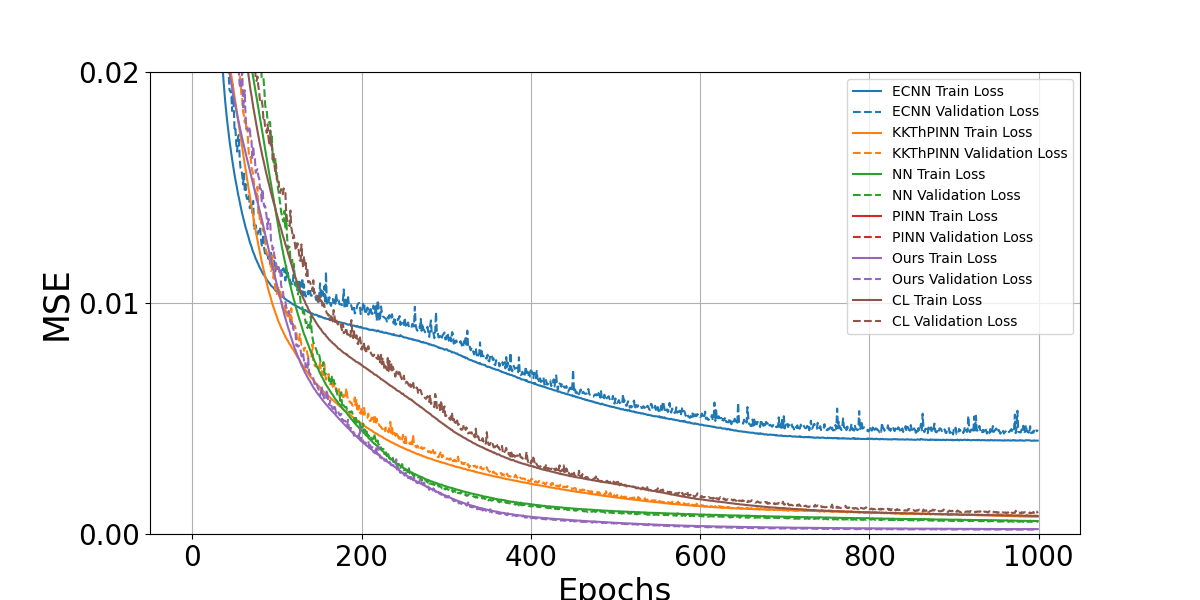}
    \caption{Training and validation MSE loss curve for \textbf{CSTR}. All results are averaged over 10 independent runs.}
    \label{fig:cstr_train_valid}
\end{figure}

\begin{figure}[t]
    \centering
    \includegraphics[width=0.45\textwidth]{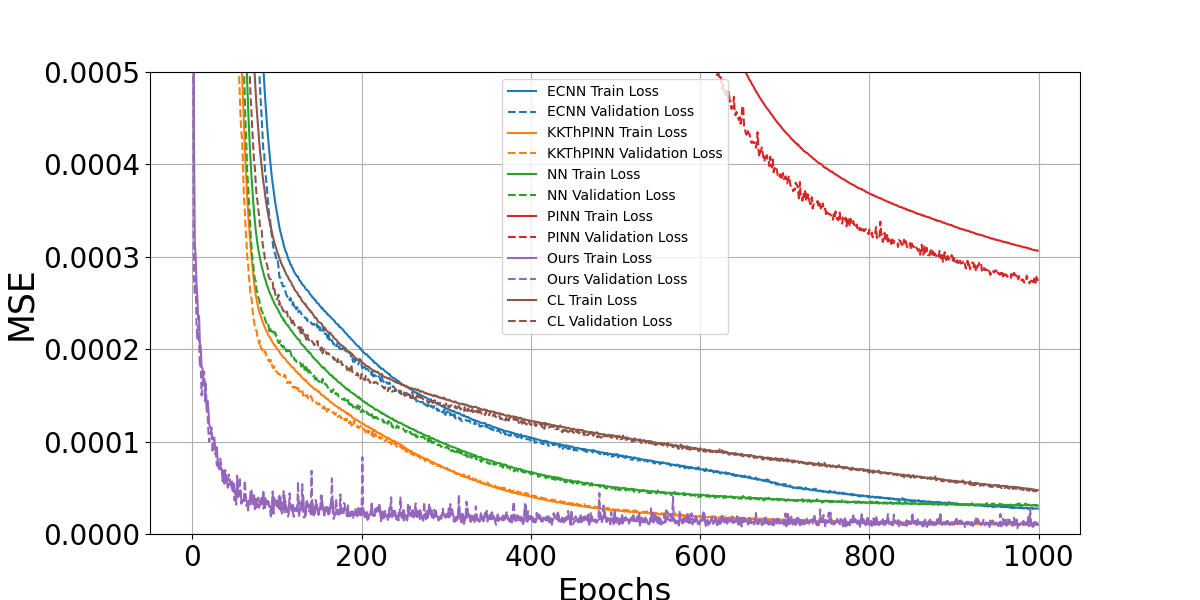}
    \caption{Training and validation MSE loss curve for \textbf{plant}. All results are averaged over 10 independent runs.}
    \label{fig:plant_train_valid}
\end{figure}

\begin{figure}[t]
    \centering
    \includegraphics[width=0.45\textwidth]{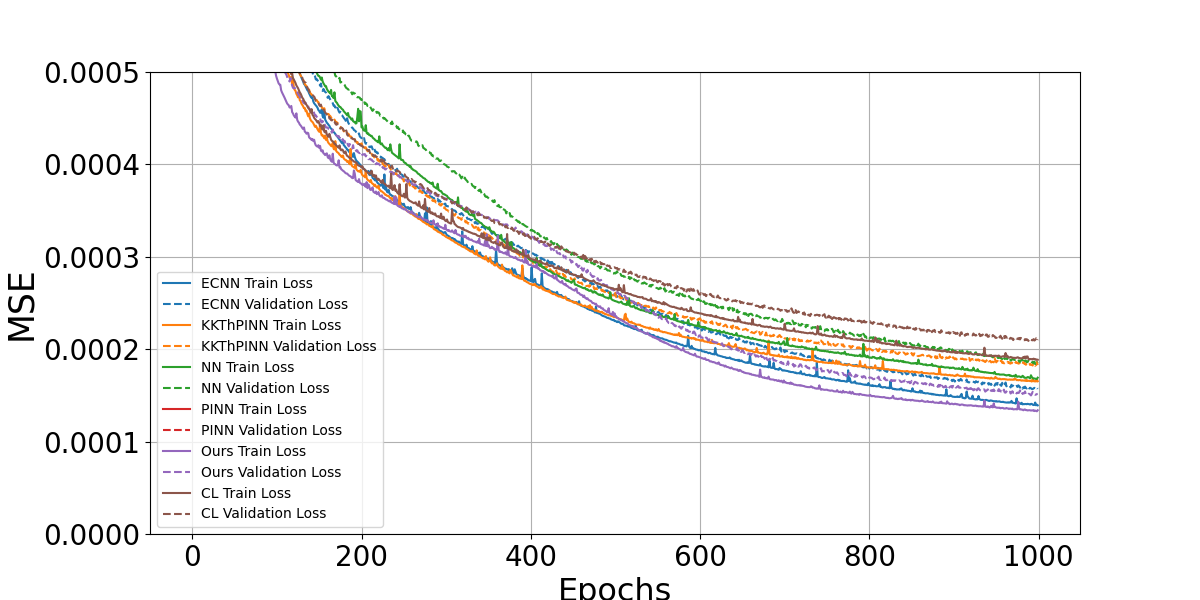}
    \caption{Training and validation MSE loss curve for \textbf{distillation}. All results are averaged over 10 independent runs.}
    \label{fig:distillation_train_valid}
\end{figure}

% \section{Extension to Linear Inequality Constraints}
% \label{appendix:Extension_to_Linear_Equality_Constraints}
% \subsection{Sampling}
% Gibbs Sampling and Hamiltonian Monte Carlo (HMC) are two widely used methods for sampling from linear inequality constrained Gaussian distribution. Gibbs Sampling \citep{Geweke1991EfficientSF} involves sequentially updating each variable from a 1D truncated Gaussian distribution defined by the constraints. Hamiltonian Monte Carlo leverages the principles of Hamiltonian dynamics to efficiently explore the constrained space. HMC introduces auxiliary momentum variables and uses gradient information to simulate trajectories through the parameter space, allowing it to make large, informed jumps between samples while respecting the constraints. \citet{pakman2013exacthamiltonianmontecarlo}  incorporates reflective boundary conditions to ensure that samples remain within the feasible region. This approach is particularly well-suited for high-dimensional problems as it mitigates the random-walk behavior often encountered in Gibbs Sampling.

% \subsection{Closed-form Expected Loss}

\section{Beyond Gaussian}
\label{sec:Beyond Gaussian}
In this section, we present the theoretical results 
when $\z$ are Poisson variables defined over discrete domains.
% for Poisson random variables. 
Similar to the Gaussian setting, we find that when the element-wise loss $\ell$ is $L1$ or $L2$ loss and the mapping $\decoder$ is an identity function, the expected loss admits closed-form expressions.
\begin{proposition}[Poisson Closed-form Expected Loss]\label{prop: closed form poisson}
    Let $\z = (z_1, \ldots, z_n)^T$, where $z_i \sim Poisson(\theta_i)$. Let $\bm{y} = \left( y_1, y_2, \ldots, y_n \right)^T$ be the ground truth vector subject to the equality constraint $\sum_{i=1}^n y_i = k$ with $k \in \mathbb{N}^+$. Then it holds that
    \begin{enumerate}[label=\roman*),noitemsep,topsep=0pt]
        \item when $\ell$ is L1 loss,
        \begin{align*}
        L(\btheta) 
        & =
        \sum_{i=1}^n (k - \floor{k p_i - d_i}) p_i Bin(\floor{k p_i - d_i}; k, p_i) \\
        & + \floor{k p_i - d_i} (1-p_i) Bin(\floor{k p_i - d_i}; k, p_i) \\
        & - 2 d_i F(\floor{k p_i - d_i}; k, p_i) + d_i;
    \end{align*}
        \item when $\ell$ is L2 loss,
        $
        % \begin{align*}
        L(\btheta) 
        =
        \sum_{i=1}^n
            k
            p_i
            \left( 1 - p_i \right)
            +
            k^2
            p_i^2
            -
            2 y_i k p_i
            +
            y_i^2 ,
    % \end{align*}
    $
    \end{enumerate}
    where $Bin$ denotes the probability mass function~(p.m.f.) of a binomial distribution and $F$ denotes a regularized incomplete beta function. $d_i = k p_i - y_i$ and $p_i =\frac{\theta_i}{\sum_{j=1}^n \theta_j}$.
\end{proposition}

In the general setting when there is no closed-form solution for the expected loss, we first show that exact sampling from the constrained distribution can be achieved as it takes its form as a multinomial distribution.
% When the expected loss does not have a closed-form solution, standard auto-differentiation cannot be directly applied. To address this, we use the gradient estimator proposed in the previous section. During the forward pass, we sample exactly from the constrained distribution. 
\begin{proposition}[Poisson Constrained Distribution]\label{prop: Poisson Constrained Distribution}
    Given $\z = (z_1, \ldots, z_n)^T$ with $z_i \sim Poisson(\theta_i)$, the constrained distribution $p_{\logits}(\bm{z} \mid \sum_{j=1}^n z_n = k)$ is equivalent to a multinomial distribution with parameter $k$ and probabilities $\frac{\theta_1}{\sum_{j=1}^n \theta_j}, \ldots, \frac{\theta_n}{\sum_{j=1}^n \theta_j}$.
\end{proposition}
In the backward pass, the results below allow us to derive gradient estimators with either the conditional marginals or the expectation of the conditional marginal as a differentiable proxy. 

\begin{proposition}[Poisson Conditional Marginal and Expectations]\label{prop: Poisson Conditional Marginal}
    Given $\z = (z_1, \ldots, z_n)^T$ with $z_i \sim Poisson(\theta_i)$, the conditional marginal $p_{\logits}(z_i \mid \sum_{j=1}^n z_j = k)$ follows a binomial distribution with parameter $k$ and probability $\frac{\theta_i}{\sum_{j=1}^n \theta_j}$. Further, its expectation 
    % of the marginal distribution for $z_i$ 
    is $\frac{k\theta_i}{\sum_{j=1}^n \theta_j}$.
\end{proposition}

We conduct Synthetic Experiment for Poisson variables with settings similar to those of Gaussian variables. We compare our gradient estimator \textit{Marginal Expectation} with three baselines, namely \textit{Random}, \textit{Unconstrained Marginal}, as well as \textit{Constrained Marginal}. Results are shown in ~\ref{fig:SyntheticPoisson}. We observe similar results as in the Gaussian settings. Estimator \textit{Unconstrained Marginal} and \textit{Random} have similar performances. \textit{Constrained Marginal} still has similarly bad performances as \textit{Random}, while \textit{Marginal Expectation} outperforms these baselines by a noticeable margin. We show that our \textit{Marginal Expectation} is not limited to Gaussian and Bernoulli variables.
\begin{figure}[t]
    \centering
    \begin{subfigure}
        \centering
        \includegraphics[height=0.125\textheight]{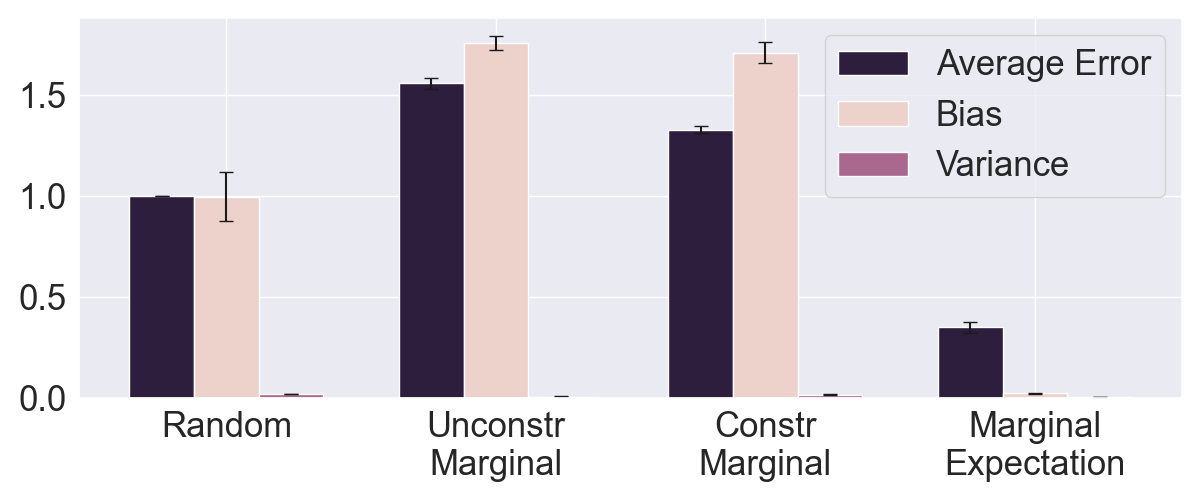}
        % \caption{L1 Loss}
    \end{subfigure}
    \hspace{0.08\textwidth}
    \begin{subfigure}
        \centering
        \includegraphics[height=0.125\textheight]{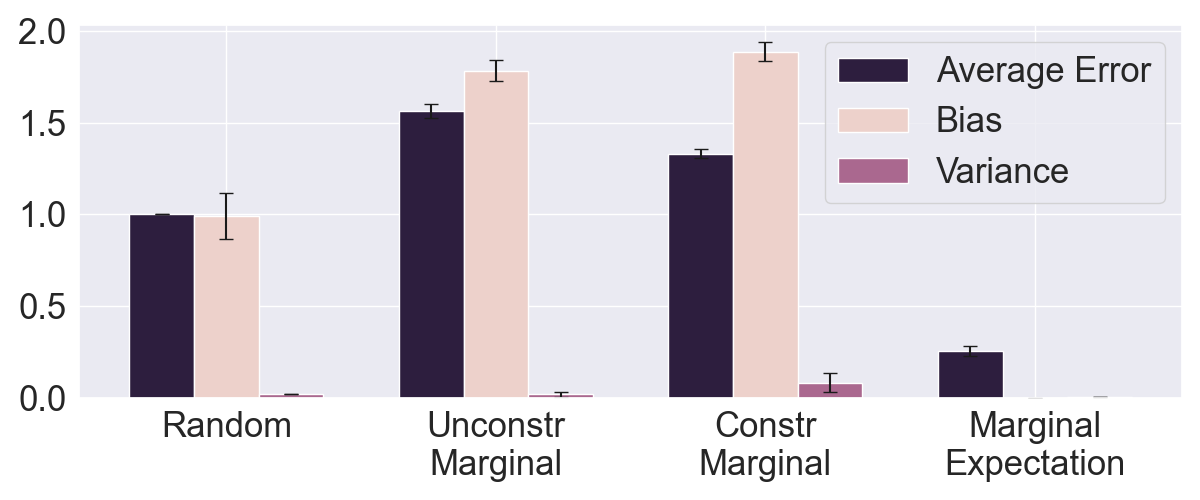}
        % \caption{L2 Loss} 
    \end{subfigure}
    \caption{Comparisons of different gradient estimators for point-wise loss $\ell$ being L1 and L2 loss applied to Poisson variable are conducted. The comparison follows the same setting as Gaussian variables. We show that our proposed gradient estimator \textit{Marginal Expectation} outperforms baselines by a significant margin.}
    \label{fig:SyntheticPoisson}
\end{figure}

\section{Proofs}
\subsection{Proposition~\ref{prop: Gaussian Constrained Distribution}}
\begin{proposition}[Gaussian Constrained Distribution]
\label{prop: Gaussian Constrained Distribution} 
Given $\z = \left( z_1, \ldots, z_n \right)^T \sim \mathcal{N} \left( \bm{\mu}, \bm{\Sigma} \right)$, 
the constrained distribution $p_{\logits}(\z \mid \bm{A} \bm{z} = \bm{k})$ is equivalent to an $n-a$ dimensional multivariate Gaussian distribution with mean $\overline{\bm{\mu}} \in \R^{n - a}$ and covariance matrix $\overline{\bm{\Sigma}} \in \R^{n - a \times {n - a}} $ defined as
\begin{align*}
    &\overline{\bm{\mu}} = 
        \bm{E} \bm{\mu} + \bm{E} \bm{\Sigma} \bm{A}^T \left( \bm{A} \bm{\Sigma} \bm{A}^T \right)^{-1} \left( \bm{k} - \bm{A} \bm{\mu} \right), \\
        % \text{~~and~~}\ 
     &\overline{\bm{\Sigma}} =
        \bm{E} \bm{\Sigma} \bm{E}^T - \bm{E} \bm{\Sigma} \bm{A}^T \left( \bm{A} \bm{\Sigma} \bm{A}^T \right)^{-1} \bm{A} \bm{\Sigma} \bm{E}^T.
\end{align*}
where $\bm{E} \in \R^{(n - a) \times (n)}$ is the first $n-a$ rows of an identity matrix $\bm{I} \in \R^{n \times n}$.
\end{proposition}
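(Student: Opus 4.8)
The plan is to recognize that conditioning a jointly Gaussian vector on a linear statistic is itself a textbook Gaussian-conditioning computation, so the only substantive work is setting up the correct joint Gaussian and then invoking the standard conditional formula. The support of the constrained distribution $p_{\logits}(\bm{z}\mid\bm{A}\bm{z}=\bm{k})$ is the affine subspace $\{\bm{z}:\bm{A}\bm{z}=\bm{k}\}$, which has dimension $n-a$ since $\mathrm{rank}(\bm{A})=a$. I would parametrize this subspace by the coordinates $\bm{E}\bm{z}$, where $\bm{E}$ collects the first $n-a$ rows of the identity. Once these $n-a$ coordinates and the $a$ constraint values $\bm{A}\bm{z}=\bm{k}$ are fixed, $\bm{z}$ is uniquely recovered, so describing the law of $\bm{E}\bm{z}\mid\bm{A}\bm{z}=\bm{k}$ is equivalent to describing the full constrained distribution; reporting its $(n-a)$-dimensional mean and covariance suffices because a Gaussian is determined by its first two moments.

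Concretely, I would stack the two linear maps as $\bm{M}=\begin{pmatrix}\bm{E}\\\bm{A}\end{pmatrix}$ and observe that $\bm{M}\bm{z}=(\bm{E}\bm{z},\bm{A}\bm{z})^{T}$ is an affine image of the Gaussian $\bm{z}$, hence jointly Gaussian. Its mean is $(\bm{E}\bm{\mu},\bm{A}\bm{\mu})^{T}$, and its covariance is the block matrix with diagonal blocks $\bm{E}\bm{\Sigma}\bm{E}^{T}$ and $\bm{A}\bm{\Sigma}\bm{A}^{T}$ and cross block $\bm{E}\bm{\Sigma}\bm{A}^{T}$. Applying the standard formula for the conditional law of one Gaussian block given another then yields
\begin{align*}
\mathbb{E}[\bm{E}\bm{z}\mid\bm{A}\bm{z}=\bm{k}] &= \bm{E}\bm{\mu}+\bm{E}\bm{\Sigma}\bm{A}^{T}(\bm{A}\bm{\Sigma}\bm{A}^{T})^{-1}(\bm{k}-\bm{A}\bm{\mu}), \\
\mathrm{Cov}[\bm{E}\bm{z}\mid\bm{A}\bm{z}=\bm{k}] &= \bm{E}\bm{\Sigma}\bm{E}^{T}-\bm{E}\bm{\Sigma}\bm{A}^{T}(\bm{A}\bm{\Sigma}\bm{A}^{T})^{-1}\bm{A}\bm{\Sigma}\bm{E}^{T},
\end{align*}
which are exactly the claimed $\overline{\bm{\mu}}$ and $\overline{\bm{\Sigma}}$, so matching these expressions closes the argument.

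The main obstacle is not the algebra but the well-definedness of the reduction. First, I must ensure $\bm{A}\bm{\Sigma}\bm{A}^{T}$ is invertible, which follows from $\bm{\Sigma}\succ 0$ together with $\mathrm{rank}(\bm{A})=a$. Second, and more delicately, the reparametrization by $\bm{E}\bm{z}$ is a genuine bijection with the constrained subspace only when $\bm{M}$ is nonsingular, i.e.\ when the chosen $n-a$ standard basis directions are complementary to the row space of $\bm{A}$; since $\mathrm{rank}(\bm{A})=a$ this can always be arranged after reordering coordinates, and I would note this reduction explicitly. Finally, because the constrained law is degenerate on a measure-zero subspace, the conditioning $\bm{A}\bm{z}=\bm{k}$ should be made rigorous via disintegration (or a limiting argument through a full-rank perturbation); working throughout with the nondegenerate $(n-a)$-dimensional law of $\bm{E}\bm{z}$ is precisely what sidesteps the degeneracy and makes the conditional-Gaussian formula directly applicable.
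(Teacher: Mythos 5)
Your proof takes essentially the same route as the paper's: stack $\bm{E}$ and $\bm{A}$ into a single linear map, note that the image of $\bm{z}$ is jointly Gaussian with block mean $(\bm{E}\bm{\mu},\bm{A}\bm{\mu})$ and the corresponding block covariance, and read off the claimed $\overline{\bm{\mu}}$ and $\overline{\bm{\Sigma}}$ from the standard Gaussian conditioning formula. You are in fact more careful than the paper, which silently assumes the stacked matrix is nonsingular; your remarks on the invertibility of $\bm{A}\bm{\Sigma}\bm{A}^{T}$, on reordering coordinates so the chosen basis directions complement the row space of $\bm{A}$, and on making the measure-zero conditioning rigorous address points the paper's proof leaves implicit.
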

% \guy{above, I have no idea what is $\bm{E}$???}
% \guy{You should refer to the appendix for details on this derivation, otherwise too much by authority.}

\begin{proof}
\label{proof:SystemofLinearEqualityConstrainedDistribution}
Let $\z = (z_1, \ldots, z_n)^T$ with $\z \sim \mathcal{N} (\bm{\mu}, \bm{\Sigma})$. Define $\bm{A}$ be the constraint matrix with $\text{rank} (\bm{A}) = a < n$. Define $\bm{E} \in \R^{(n - a) \times (n)}$ to contain the first $n-a$ rows of an identity matrix $\bm{I} \in \R^{n \times n}$. Thus, we consider the following linear transformation defined by $\bm{C}$.
\begin{align*}
    \bm{C} \bm{\mu} & =
        \begin{pmatrix}
            \bm{E} \bm{\mu} \\
            \bm{A} \bm{\mu} \\
        \end{pmatrix} \\
    \bm{C} \bm{\Sigma} \bm{C}^T & =
        \begin{pmatrix}
            \bm{E} \bm{\Sigma} \bm{E}^T & \bm{E} \bm{\Sigma} \bm{A}^T\\
            \bm{A} \bm{\Sigma} \bm{E}^T & \bm{A} \bm{\Sigma} \bm{A}^T\\
        \end{pmatrix} \\
\end{align*}
Thus, the conditional distribution of $\bm{E} \bm{z}$ condition on $\bm{A} \bm{z}$ follows multivariate Gaussian distribution with parameters as follows:
\begin{align*}
    \overline{\bm{\mu}} & = 
        \bm{E} \bm{\mu} + \bm{E} \bm{\Sigma} \bm{A}^T \left( \bm{A} \bm{\Sigma} \bm{A}^T \right)^{-1} \left( \bm{k} - \bm{A} \bm{\mu} \right) \\
    \overline{\bm{\Sigma}} & =
        \bm{E} \bm{\Sigma} \bm{E}^T - \bm{E} \bm{\Sigma} \bm{A}^T \left( \bm{A} \bm{\Sigma} \bm{A}^T \right)^{-1} \bm{A} \bm{\Sigma} \bm{E}^T
\end{align*}
\end{proof}

\subsection{Proposition~\ref{prop:closed form gaussian}}
\label{proof:Closed-formExpectedLossunderGaussian}
\begin{proposition}[Gaussian Closed-form Expected Loss]\label{prop:closed form gaussian} 
    Let $\z \sim \mathcal{N} \left( \bm{\mu}, \bm{\Sigma} \right)$. 
    Let $\bm{y} = \left( y_1, \ldots, y_n \right)^T$ be the ground truth vector subject to the equality constraint $\bm{A} \bm{z} = \bm{k}$. 
    Then it holds that
    \begin{enumerate}[label=\roman*),noitemsep,topsep=0pt]
        \item when $\ell$ is L1 loss,
        $L(\btheta)$ has closed form 
        
        % \begin{align*}
        $
        \sum_{i=1}^n 
\overline{\bm{\Sigma}}_{i,i} \sqrt{\frac{2}{\pi}} 
e^{-\frac{(\overline{\bm{\mu}}_i - y_i)^2}{2 \overline{\bm{\Sigma}}_{i,i}^2}}
+ 
(\overline{\bm{\mu}}_i - y_i) \,\textit{erf} \left( \frac{\overline{\bm{\mu}}_i - y_i}{\sqrt{2} \, \overline{\bm{\Sigma}}_{i,i}} \right)
    % \end{align*}
    $;
        \item when $\ell$ is L2 loss,
        $
        % \begin{align*}
        \sum_{i=1}^n
            \overline{\bm{\mu}}_i^2 + \overline{\bm{\Sigma}}_{i,i}^2
            -
            2 y_i \overline{\bm{\mu}}_i
            +
            y_i^2$,
    % \end{align*}
    \end{enumerate}
    where $\overline{\bm{\mu}}$ and $\overline{\bm{\Sigma}}$ are defined above.
\end{proposition}

\begin{proof}
Let $\z = \left( z_1, \ldots, z_n \right)^T \sim \mathcal{N} \left( \bm{\mu}, \bm{\Sigma} \right)$. Let $\bm{y} = \left( y_1, y_2, \ldots, y_n \right)^T$ be the ground truth subject to the equality constraint $\bm{A} \bm{y} = \bm{k}$. We derive a closed-form solution for the L1 loss of $\z$ subject to the constraint $\bm{A} \bm{z} = \bm{k}$.
\begin{align*}
    L(\theta)
    & = \mathds{E}_{\bm{z} \sim p_{\theta}(\bm{z} \mid \bm{A} \bm{z} = \bm{k})}[\parallel \bm{z} - \bm{y} \parallel] \\
    & = \sum_{i=1}^n \mathds{E}_{\bm{z} \sim p_{\theta}(\bm{z} \mid \bm{A} \bm{z} = \bm{k})}[\parallel z_i - y_i \parallel]
\end{align*}
We know that $z_i \sim N \left( \overline{\mu}_i, \overline{\sigma}_i^2 \right)$ from below. Then, define $l_i = z_i - y_i$. Then $l_i \sim N \left( \overline{\mu}_i - y_i, \overline{\sigma}_i^2 \right)$. Thus, $\mathds{E}_{\bm{z} \sim p_{\theta}(\bm{z} \mid \bm{A} \bm{z} = \bm{k})}[\mid l_i \mid]$ is the mean of a folded normal distribution.
\begin{align*}
    \mathds{E}_{\bm{z} \sim p_{\theta}(\bm{z} \mid \bm{A} \bm{z} = \bm{k})}[\mid l_i \mid]
    =
    \overline{\sigma}_i^2 \sqrt{\frac{2}{\pi}} \exp \left( \frac{- (\overline{\mu}_i - y_i)^2}{2 \overline{\sigma}_i^2} \right) \\
    +
    (\overline{\mu}_i - y_i) erf \left( \frac{\overline{\mu}_i - y_i}{\sqrt{2 \overline{\sigma}_i^2}} \right)
\end{align*}
We also derive a closed-form solution for the L2 loss of $\z$ subject to the constraint $\bm{A} \bm{z} = \bm{k}$.
\begin{align*}
    & L(\theta)
    = \mathds{E}_{\bm{z} \sim p_{\theta}(\bm{z} \mid \bm{A} \bm{z} = \bm{k})}[\parallel \bm{z} - \bm{y}\parallel^2] \\
    & = \sum_{i=1}^n \mathds{E}_{\bm{z} \sim p_{\theta}(z_i \mid \bm{A} \bm{z} = \bm{k})}[\parallel z_i - y_i \parallel^2] \\
    & = \sum_{i=1}^n \mathds{E}_{\bm{z} \sim p_{\theta}(z_i \mid \bm{A} \bm{z} = \bm{k})}[z_i^2 - 2y_iz_i + y_i^2] \\
    & = \sum_{i=1}^n \overline{\mu}_i + \overline{\sigma}_i^4 - 2 y_i \overline{\mu}_i + y_i^2  
\end{align*}
\end{proof}

\subsection{Proposition~\ref{prop: Gaussian Conditional Marginal}}

\begin{proposition}[Gaussian Conditional Marginal and Expectations]\label{prop: Gaussian Conditional Marginal}
    Given $\z = \left( z_1, \ldots, z_n \right)^T \sim \mathcal{N} \left( \bm{\mu}, \bm{\Sigma} \right)$, the conditional marginal $p_{\logits}(z_i \mid \bm{A} \z = \bm{k})$ follows a univariate Gaussian distribution with mean $\overline{\mu}_i = \mu_i + \bm{e}_i^T \bm{\Sigma} \bm{A} \left( \bm{A} \bm{\Sigma} \bm{A}^T \right)^{-1} \left( \bm{k} - \bm{A} \bm{\mu} \right)$ and variance $\overline{\sigma}_i^2 = \bm{e}_i^T \bm{\Sigma} \bm{e}_i - \bm{e}_i^T \bm{\Sigma} \bm{A}^T \left( \bm{A} \bm{\Sigma} \bm{A}^T \right)^{-1} \bm{A} \bm{\Sigma} \bm{e}_i$.
    Further, the expectation of the marginal distribution is $\overline{\mu}_i$.
\end{proposition}

\begin{proof}
To derive the marginal distribution, let's consider the standard basis vector $\bm{e}_i$ in $\R^{n}$. Define the linear transformation $\bm{B}$ such that
\begin{align*}
    \bm{B} = \begin{pmatrix}
        \bm{e}_i^T \\
        \bm{A} \\
    \end{pmatrix}
\end{align*}
Then, 
\begin{align*}
    \bm{B} \bm{\mu} & =
        \begin{pmatrix}
            \mu_i \\
            \bm{A} \bm{\mu} \\
        \end{pmatrix} \\
    \bm{B} \bm{\Sigma} \bm{B}^T & =
        \begin{pmatrix}
            \bm{e}_i^T \bm{\Sigma} \bm{e}_i & \bm{e}_i^T \bm{\Sigma} \bm{A}^T\\
            \bm{A} \bm{\Sigma} \bm{e}_i & \bm{A} \bm{\Sigma} \bm{A}^T\\
        \end{pmatrix} \\
\end{align*}
Thus, the conditional distribution of $z_i$ condition on $\bm{A} \bm{z}$ follows multivariate normal distribution with parameters as follows:
\begin{align*}
    \overline{\mu}_i & = 
        \mu_i + \bm{e}_i^T \bm{\Sigma} \bm{A} \left( \bm{A} \bm{\Sigma} \bm{A}^T \right)^{-1} \left( \bm{k} - \bm{A} \bm{\mu} \right) \\
    \overline{\sigma}_i^2 & =
        \bm{e}_i^T \bm{\Sigma} \bm{e}_i - \bm{e}_i^T \bm{\Sigma} \bm{A}^T \left( \bm{A} \bm{\Sigma} \bm{A}^T \right)^{-1} \bm{A} \bm{\Sigma} \bm{e}_i
\end{align*}
\end{proof}

\subsection{Proposition~\ref{prop: closed form poisson}}
\label{proof:Closed-formExpectedLossunderPoisson}

\begin{proposition}[Poisson Closed-form Expected Loss]\label{prop: closed form poisson}
    Let $\z = (z_1, \ldots, z_n)^T$, where $z_i \sim Poisson(\theta_i)$. Let $\bm{y} = \left( y_1, y_2, \ldots, y_n \right)^T$ be the ground truth vector subject to the equality constraint $\sum_{i=1}^n y_i = k$ with $k \in \mathbb{N}^+$. Then it holds that
    \begin{enumerate}[label=\roman*),noitemsep,topsep=0pt]
        \item when $\ell$ is L1 loss,
        \begin{align*}
        L(\btheta) 
        & =
        \sum_{i=1}^n (k - \floor{k p_i - d_i}) p_i Bin(\floor{k p_i - d_i}; k, p_i) \\
        & + \floor{k p_i - d_i} (1-p_i) Bin(\floor{k p_i - d_i}; k, p_i) \\
        & - 2 d_i F(\floor{k p_i - d_i}; k, p_i) + d_i;
    \end{align*}
        \item when $\ell$ is L2 loss,
        $
        % \begin{align*}
        L(\btheta) 
        =
        \sum_{i=1}^n
            k
            p_i
            \left( 1 - p_i \right)
            +
            k^2
            p_i^2
            -
            2 y_i k p_i
            +
            y_i^2 ,
    % \end{align*}
    $
    \end{enumerate}
    where $Bin$ denotes the probability mass function~(p.m.f.) of a binomial distribution and $F$ denotes a regularized incomplete beta function. $d_i = k p_i - y_i$ and $p_i =\frac{\theta_i}{\sum_{j=1}^n \theta_j}$.
\end{proposition}

\begin{proof}
Let $\z = (z_1, \ldots, z_n)^T$, where $z_i \sim Poisson(\theta_i)$. Let $\bm{y} = \left( y_1, y_2, \ldots, y_n \right)^T$ be the ground truth vector subject to the equality constraint $\sum_{j=1}^n y_j = k$. We derive the closed-form expression for the L1 loss of $\z$ subject to the constraint $\sum_{j=1}^n z_j = k$.

\begin{align*}
    L(\btheta) & = \mathds{E}_{\bm{z} \sim p(\bm{z} \mid \sum_{j=1}^n z_j = k)}[\parallel \bm{z} - \bm{y} \parallel_1] \\
    & = \sum_{i=1}^n \mathds{E}_{z_i \sim p(z_i \mid \sum_{j=1}^n z_j = k)}[| z_i - y_i |] \\
\end{align*}
Define $d_i = k p_i - y_i$, where $p_i =\frac{\theta_i}{\sum_{j=1}^n \theta_j}$. Then,
\begin{align*}
    L(\btheta) & = \sum_{i=1}^n \mathds{E}_{z_i \sim p(z_i \mid \sum_{j=1}^n z_j = k)}[| z_i - k p_i + d_i|] \\
    & = \sum_{i=1}^n \sum_{all\ z_i} | z_i - k p_i + d_i| Binomial(z_i; k, p_i) \\
    = & \sum_{z_i = 0}^{\floor{kp_i - d_i}} ( - z_i + k p_i ) Binomial(z_i; k, p_i) \\
    + & \sum_{\floor{k p_i - d_i}}^{k} ( z_i - k p_i ) Binomial(z_i; k, p_i) \\
    - & d_i \sum_{z_i = 0}^{\floor{k p_i - d_i}} Binomial(z_i; k, p_i) \\
    + & d_i   \sum_{\floor{k p_i - d_i}}^{k} Binomial(z_i; k, p_i)
\end{align*}
Consider the following lemma (Todhunter's Formula \citet{diaconis1991closed})
\begin{lemma}
For all integers $0 \leq \alpha \leq \beta \leq n$,
\begin{align*}
    & \sum_{x=\alpha}^{\beta} (x-np) Binomial(x;n,p) \\
    = & \alpha (1-p) Binomial(\alpha; n, p) \\
    - & (n-\beta) p Binomial(\beta; n, p)
\end{align*}
\end{lemma}
Then,
\begin{align*}
    & \sum_{z_i = 0}^{\floor{kp_i - d_i}} ( - z_i + k p_i ) Binomial(z_i; k, p_i) \\
    & = (k - \floor{k p_i - d_i}) p_i Binomial(\floor{k p_i - d_i}; k, p_i)
\end{align*}
and
\begin{align*}
    & \sum_{\floor{k p_i - d_i}}^{k} ( z_i - k p_i ) Binomial(z_i; k, p_i) \\
    & = \floor{k p_i - d_i} (1-p_i) Binomial(\floor{k p_i - d_i}; k, p_i)
\end{align*}
Next, we notice that
\begin{align*}
    & -  d_i \sum_{z_i = 0}^{\floor{k p_i - d_i}} Binomial(z_i; k, p_i) \\
    & + d_i   \sum_{\floor{k p_i - d_i}}^{k} Binomial(z_i; k, p_i) \\
    = & -  d_i \sum_{z_i = 0}^{\floor{k p_i - d_i}} Binomial(z_i; k, p_i) \\
    + & d_i \left( 1 - \sum_{z_i = 0}^{\floor{k p_i - d_i}} Binomial(z_i; k, p_i) \right) \\
    = & - 2 d_i \sum_{z_i = 0}^{\floor{k p_i - d_i}} Binomial(z_i; k, p_i) + d_i
\end{align*}
Define the regularized incomplete beta function as
\begin{equation*}
    F(x; n, p) = (n-x) {n \choose x} \int_{0}^{1-p} t^{n-x-1} (1-t)^x\ dt
\end{equation*}
Then,
\begin{align*}
    & - 2 d_i \sum_{z_i = 0}^{\floor{k p_i - d_i}} Binomial(z_i; k, p_i) + d_i \\
    = &
    -2 d_i F(\floor{k p_i - d_i}; k, p_i) + d_i 
\end{align*}
Thus, the closed-form expression for the L1 loss is
\begin{align*}
    & L(\btheta) \\
    = & 
    \sum_{i=1}^n (k - \floor{k p_i - d_i}) p_i Binomial(\floor{k p_i - d_i}; k, p_i) \\
    + & \floor{k p_i - d_i} (1-p_i) Binomial(\floor{k p_i - d_i}; k, p_i) \\
    - & 2 d_i F(\floor{k p_i - d_i}; k, p_i) + d_i 
\end{align*}
We attempt to derive a closed-form solution for the L2 loss of $\z$ subject to the constraint $\sum_{j=1}^n z_j = k$.
\begin{align*}
    L(\btheta)
    & = \mathds{E}_{\bm{z} \sim p_{\btheta}(\bm{z} \mid \sum_i z_i = k)}[\parallel \bm{z} - \bm{b }\parallel_2^2] \\
    & =
    \sum_{i=1}^n \mathds{E}_{z_i \sim p_{\btheta}(z_i \mid \sum_j z_j = k)} [z_i^2] \\
    & -
    2
    \sum_{i=1}^n
    y_i
    \mathds{E}_{z_i \sim p_{\btheta}(z_i \mid \sum_j z_j = k)} [z_i]
    +
    \sum_{i=1}^n y_i^2
\end{align*}
Since the conditional marginal distribution is a binomial distribution, it's second moment is given by
\begin{align*}
    & \sum_{i=1}^n \mathds{E}_{z_i \sim p_{\btheta}(z_i \mid \sum_j z_j = k)} [z_i^2] \\
    = &
    \sum_{i=1}^n
        k
        \left( \frac{\theta_i}{\sum_{j=1}^n \theta_j} \right)
        \left( \frac{\sum_{j=1}^n \theta_j - \theta_i}{\sum_{j=1}^n \theta_j} \right)
        +
        \left( \frac{k \theta_i}{\sum_{j=1}^n \theta_j} \right)^2
\end{align*}
It's first moment(mean) is given by
\begin{equation*}
    -
    2
    \sum_{i=1}^n
    y_i
    \mathds{E}_{z_i \sim p_{\btheta}(z_i \mid \sum_j z_j = k)} [z_i]
    =
    -
    2 k
    \sum_{i=1}^n
    y_i \left( \frac{\theta_i}{\sum_{j=1}^n \theta_j} \right)
\end{equation*}
Thus, we have
\begin{align*}
    & \sum_{i=1}^n \mathds{E}_{z_i \sim p_{\btheta}(z_i \mid \sum_j z_j = k)} [z_i^2] \\
    = &
    \sum_{i=1}^n
        k
        \left( \frac{\theta_i}{\sum_{j=1}^n \theta_j} \right)
        \left( \frac{\sum_{j=1}^n \theta_j - \theta_i}{\sum_{j=1}^n \theta_j} \right)
        +
        \left( \frac{k \theta_i}{\sum_{j=1}^n \theta_j} \right)^2
    \\
    & -
    2 k
    \sum_{i=1}^n
    y_i \left( \frac{\theta_i}{\sum_{j=1}^n \theta_j} \right)
    +
    \sum_{i=1}^n y_i^2
\end{align*}
\end{proof}

\subsection{Proposition~\ref{prop: Poisson Constrained Distribution}}
\label{proof:PoissonConstrainedDistribution}

\begin{proposition}[Poisson Constrained Distribution]\label{prop: Poisson Constrained Distribution}
    Given $\z = (z_1, \ldots, z_n)^T$ with $z_i \sim Poisson(\theta_i)$, the constrained distribution $p_{\logits}(\bm{z} \mid \sum_{j=1}^n z_n = k)$ is equivalent to a multinomial distribution with parameter $k$ and probabilities $\frac{\theta_1}{\sum_{j=1}^n \theta_j}, \ldots, \frac{\theta_n}{\sum_{j=1}^n \theta_j}$.
\end{proposition}

\begin{proof}
Let $\z = (z_1, \ldots, z_n)^T$, where $z_i \sim Poisson(\theta_i)$. We compute a closed-form solution for the conditional probability $p_{\logits} \left( \z \mid \sum_{j=1}^n z_j = k \right)$.
\begin{equation*}
    p_{\logits} \left(( \z | \sum_{j=1}^n z_j = k \right) = \frac{p \left( \z \cap \sum z_i = k \right)}{p \left( \sum_{j=1}^n z_j = k \right)}
\end{equation*}
Let $Y = \sum_{j=1}^n z_j$. The denominator is the p.d.f. of $Y$ evaluated at $k$. Since $Y$ is a linear combination of independent Poisson random variables, we know $Y \sim Poisson(\sum_{j=1}^n \theta_j)$. Thus,
\begin{equation*}
    p \left( \sum_{j=1}^n z_j = k \right)
    =
    \frac{e^{- \sum_{j=1}^n \theta_j} \left( \sum_{j=1}^n \theta_j \right)^k}{k!}
\end{equation*}
Next, let's consider the numerator.
\begin{equation*}
    p(\bm{z} \cap \sum_{j=1}^n z_j = k) = 
    \begin{cases} 
      p(\bm{z}) & \sum_{j=1}^n z_j = k \\
      0 &\sum_{j=1}^n z_j \neq k
   \end{cases}
\end{equation*}
where $p(\z) = \prod_{i=1}^n f(z_i) = \prod_{i=1}^n \frac{e^{-\theta_i} \theta_i^{z_i}}{z_i!}$.
Thus, our conditional distribution is given by
\begin{align*}
    & p(\bm{z} | \sum_{j=1}^n z_j = k) \\
    = &
    \begin{cases} 
      \frac{\frac{e^{- \sum_{i=1}^n \theta_i} \prod_{i=1}^n \theta_i^{z_i}}{\prod_{i=1}^n z_i !}}
      {\frac{e^{- \sum_{i=1}^n \theta_i} \left( \sum_{i=1}^n \theta_i \right)^k}{k!}} 
      & \sum_{j=1}^n z_j = k \\ 
      0 & \sum_{j=1}^n z_j \neq k
   \end{cases} \\
   = &
    \begin{cases} 
      \frac{k! \prod_{i=1}^n \theta_i^{z_i} }
      {(\sum_{i=1}^n \theta_i)^k \prod_{i=1}^n z_i !} 
      & \sum_{j=1}^n z_j = k \\ 
      0 & \sum_{j=1}^n z_j \neq k
   \end{cases} \\
   = &
    \begin{cases} 
       \frac{1}{(\sum_{i=1}^n \theta_i)^k} \cdot \frac{k!}{\prod_{i=1}^n z_i !} \prod_{i=1}^n \theta_i^{z_i} 
      & \sum_{j=1}^n z_j = k \\ 
      0 & \sum_{j=1}^n z_j \neq k
   \end{cases} \\
   = &
    \begin{cases} 
      \frac{k!}{\prod_{i=1}^n z_i !} \prod_{i=1}^n \left( \frac{\theta_i}{\sum_{j=1}^n \theta_j} \right)^{z_i} 
      & \sum_{j=1}^n z_j = k \\ 
      0 & \sum_{j=1}^n z_j \neq k
   \end{cases} \\
   = &
    f \left( \bm{z}; k, \frac{\theta_1}{\sum_{j=1}^n \theta_j}, \ldots, \frac{\theta_n}{\sum_{j=1}^n \theta_j} \right)
\end{align*}
 where $f \left( \bm{z}; k, \frac{\theta_1}{\sum_{j=1}^n \theta_j}, \ldots, \frac{\theta_n}{\sum_{j=1}^n \theta_j} \right)$ is the probability mass function of a multinomial distribution with parameter $k$ and $\frac{\theta_1}{\sum_{j=1}^n \theta_j}, \ldots, \frac{\theta_n}{\sum_{j=1}^n \theta_j}$.
\end{proof}

\subsection{Proposition~\ref{prop: Poisson Conditional Marginal}}
\label{proof:PoissonConditionalMarginal}

\begin{proposition}[Poisson Conditional Marginal and Expectations]\label{prop: Poisson Conditional Marginal}
    Given $\z = (z_1, \ldots, z_n)^T$ with $z_i \sim Poisson(\theta_i)$, the conditional marginal $p_{\logits}(z_i \mid \sum_{j=1}^n z_j = k)$ follows a binomial distribution with parameter $k$ and probability $\frac{\theta_i}{\sum_{j=1}^n \theta_j}$. Further, its expectation 
    % of the marginal distribution for $z_i$ 
    is $\frac{k\theta_i}{\sum_{j=1}^n \theta_j}$.
\end{proposition}

\begin{proof}
Let $\z = (z_1, \ldots, z_n)^T$, where $z_i \sim Poisson(\theta_i)$. We compute a closed-form solution for the conditional marginal $p_{\logits} (z_i \mid \sum_{j=1}^n z_n = k)$.
Since the marginal of each variable of a multinomial distribution is a binomial distribution, then the conditional marginal is
\begin{align*}
    & p \left( z_i \mid \sum_{j=1}^n z_j = k \right) \\
    = &
    {k \choose z_i} \left( \frac{\theta_i}{\sum_{j=1}^n \theta_j} \right)^{z_i} \left( 1 - \frac{\theta_i}{\sum_{j=1}^n \theta_j} \right)^{n - z_i} 
\end{align*}
This is the probability mass function of a binomial distribution with parameter $k$ and probability $ \frac{\theta_i}{\sum_{j=1}^n \theta_j}$.
\end{proof}

\section{MOF Expected Loss NLL Explode}
In this section, we provide a mathematical explanation for the negative log likelihood to explode for closed-form expected loss in the MOF experiment. The L1 loss function is given by
\begin{align*}
    L(\btheta) 
    & =
    \sum_{i=1}^n
    \overline{\sigma}_i \sqrt{\frac{2}{\pi}} \exp \left( \frac{- (\overline{\mu}_i - y_i)^2}{2 \overline{\sigma}_i^2} \right) \\
    & +
    (\overline{\mu}_i - y_i) \,\textit{erf} \left( \frac{\overline{\mu}_i - y_i}{\sqrt{2 \overline{\sigma}_i^2}} \right)
\end{align*}
with $\overline{\mu}_i := \bm{\mu}_i + \frac{\bm{\Sigma}_{i,i}}{\sum_{t=1}^n \bm{\Sigma}_{t,t} } \left( k - \sum_{j=1}^n \bm{\mu}_j \right)$ and $\overline{\sigma}_i^2 := \bm{\Sigma}_{i,i} - \frac{\left( \bm{\Sigma}_{i,i} \right)^2}{\sum_{t=1}^n \bm{\Sigma}_{t,t} }$.
Define a constant $c$ such that $0 < c < 1$. Notice that if we scale the unconstrained variance $\bm{\Sigma}_{i,i}$ by $c$, $\overline{\mu}_{i, \text{scaled}} = \overline{\mu}_i$ and $\overline{\sigma}_{i, \text{scaled}}^2 = c \overline{\sigma}_i^2$.
\begin{align*}
    & \lim_{c \rightarrow 0} \overline{\sigma}_{i, \text{scaled}} \sqrt{\frac{2}{\pi}} \exp \left( \frac{- (\overline{\mu}_{i, \text{scaled}} - y_i)^2}{2 \overline{\sigma}_{i, \text{scaled}}^2} \right) \\
    = & \lim_{c \rightarrow 0} c \overline{\sigma}_{i} \sqrt{\frac{2}{\pi}} \exp \left( \frac{1}{c} \frac{- (\overline{\mu}_{i} - y_i)^2}{2 \overline{\sigma}_{i}^2} \right) \\
    = & 0
\end{align*}
Also,
\begin{align*}
    & \lim_{c \rightarrow 0} (\overline{\mu}_{i, \text{scaled}} - y_i) \,\textit{erf} \left( \frac{\overline{\mu}_{i, \text{scaled}} - y_i}{\sqrt{2 \overline{\sigma}_{i, \text{scaled}}^2}} \right) \\
    = & \lim_{c \rightarrow 0} (\overline{\mu}_{i} - y_i) \,\textit{erf} \left( \frac{1}{\sqrt{c}} \frac{\overline{\mu}_{i} - y_i}{\sqrt{2 \overline{\sigma}_{i}^2}} \right) \\
    = & | \overline{\mu}_{i} - y_i |
\end{align*}
As the scaling factor decreases, the expected loss converges, which shows that the expected loss favors variance with smaller magnitude. However, MAE is only associated with the constrained mean, which is invariant to scaling of the variance. Extremely small variance causes the constrained distribution to approach the shape of a dirac delta function, causing the NLL to explode.

\section{Hardware and Software Specification}
We implement our model in PyTorch. All experiments are run on servers/workstations with the following configuration:
\begin{itemize}
    \item 32 CPUs, 128G Mem, 4 × NVIDIA A5000 GPUs. Ubuntu 22.04.4
    \item 128 CPUs, 480G Mem, 8 × NVIDIA RTX 4090 GPUs. Ubuntu 22.04
    \item 48 CPUs, 200G Mem, 8 × NVIDIA V100 GPUs. Ubuntu 22.04
\end{itemize}

\end{document}